\def\eqref#1{equation~\ref{#1}}
\def\1{\bm{1}}
\DeclareMathAlphabet{\mathsfit}{\encodingdefault}{\sfdefault}{m}{sl}
\SetMathAlphabet{\mathsfit}{bold}{\encodingdefault}{\sfdefault}{bx}{n}
\pgfplotsset{compat=1.17}
\theoremstyle{plain}
\newtheorem{theorem}{Theorem}[section]
\newtheorem{lemma}[theorem]{Lemma}
\newtheorem{claim}[theorem]{Claim}
\newtheorem{corollary}[theorem]{Corollary}
\theoremstyle{definition}
\newtheorem{assumption}[theorem]{Assumption}
\theoremstyle{remark}
\icmltitlerunning{Efficient Attention via Pre-Scoring}
\begin{document}

\twocolumn[
\icmltitle{Efficient Attention via Pre-Scoring:\\
Prioritizing Informative Keys in Transformers}

\icmlsetsymbol{equal}{*}

\begin{icmlauthorlist}
  \icmlauthor{Zhexiang Li}{usc,equal}
  \icmlauthor{Haoyu Wang}{ucsd,equal}
  \icmlauthor{Yutong Bao}{ucd,equal}
  \icmlauthor{David P. Woodruff}{cmu}
\end{icmlauthorlist}

\icmlaffiliation{usc}{University of Southern California, Los Angeles, CA, USA}
\icmlaffiliation{ucsd}{University of California, San Diego, Department of Mathematics, La Jolla, CA, USA}
\icmlaffiliation{ucd}{University of California, Davis, Applied Mathematics \& Statistics, Davis, CA, USA}
\icmlaffiliation{cmu}{Carnegie Mellon University, Pittsburgh, PA, USA}

\icmlcorrespondingauthor{David P. Woodruff}{dwoodruf@cs.cmu.edu}

\icmlkeywords{Transformers, Efficient Attention, Long Context, Clustering, Leverage Scores}

\vskip 0.3in
]

\printAffiliationsAndNotice{
  \icmlEqualContribution\quad
  Code: \url{https://github.com/BruceLZX/prescored-transformer}
}

\begin{abstract}
Efficient attention mechanisms enable long-context transformers but often miss globally important tokens, degrading modeling quality. We introduce a pre-scoring framework that assigns a query-independent global importance prior to keys before applying hierarchical approximate attention. Using clustering-based or leverage-style scoring, pre-scoring identifies structurally informative keys and restricts computation to this prioritized subset. Integrated with HyperAttention, pre-scoring substantially improves approximation quality on long-context language modeling: on ChatGLM with 131k-token contexts, perplexity decreases from 12.0 to 9.5 under a fixed interaction budget while retaining subquadratic efficiency. Clustering-based scoring consistently outperforms leverage-based selection under identical key budgets. Beyond language, replacing self-attention in Vision Transformers preserves most of the baseline accuracy, showing that the approach generalizes across modalities. We provide structural guarantees under a planted-subspace model, showing that clustering recovers the same heavy-key sets as leverage-based methods. Overall, pre-scoring improves the efficiency–accuracy trade-off of approximate attention by better prioritizing informative keys without sacrificing scalability.
\end{abstract}

\section{Introduction}
Transformer-based large language models (LLMs) now underpin cutting-edge performance across diverse applications, including computer vision \cite{bi2021} and text classification \cite{Rodrawangpai2022}, yet their quadratic self-attention cost remains a persistent barrier to efficient long-context processing. Especially for Transformer-based attention mechanisms, the time complexity rises quadratically with respect to the sequence length.  Without effective strategies to cut this cost, deploying transformer models on tasks requiring massive contexts remains impractical and challenging. 

Consider the input of each attention layer, which is typically represented as an $n\times d$ matrix $X$, where $n$ denotes the context length and $d$ is the embedding dimension of the tokens. From this input matrix, we generate three distinct matrices by applying learned linear transformations. Specifically, we compute the query matrix $Q=X \cdot W_Q$, the key matrix $K=X \cdot W_K$, and the value matrix $V=X \cdot W_V $, where $W_Q$, $W_K$, and $W_V$ are learned parameter matrices of dimensions $d\times d$. Various methods have been developed to improve efficiency when processing $Q$, $K$, and $V$. For example, Performer \cite{choromanski2020} replaces the standard softmax attention with kernel methods. This remarkably reduces computational complexity, yet it involves potential kernel approximation-induced errors. FlashAttention \cite{dao2022} facilitates computation by optimizing memory access patterns during the matrix computations. Despite its advantages, specialized hardware support is required for optimal performance. 

More recent advances in efficient attention methods have explored diverse strategies to mitigate quadratic complexity while maintaining performance. For long-context processing, DuoAttention \cite{duoattention2024} combines retrieval-augmented heads with streaming attention mechanisms, dynamically balancing local and global context access. The adaptive approach in "Unveiling Simplicities of Attention" \cite{simplicities2024} identifies and preserves only the essential attention heads for long-context modeling, achieving comparable performance with significantly reduced computation. 

In addition, HyperAttention \cite{han2023} reduces the computational burden using approximate matrix product and locality sensitive hashing. These methods improve efficiency at the cost of some degradation in perplexity. LevAttention \cite{kannan2024} selects a fixed set of keys independent of the queries to try to capture all the heavy attention scores. However, it is less effective in capturing query-specific attention patterns.

Inspired by HyperAttention and LevAttention, we identify a missing structural component in existing efficient attention designs. Most approximating attention mechanisms focus on \emph{query-dependent locality}, which restrict each query to nearby or hashed keys through routing, LSH, or blockwise partitioning. While effective for reducing computation, such locality-based strategies may miss tokens that are globally influential across many queries.

We introduce a complementary axis: a query-independent global importance prior over keys. Before applying hierarchical attention, we perform a lightweight pre-scoring step that estimates which keys are structurally informative, using clustering-based or leverage-style ranking. Attention is then restricted to this prioritized subset, improving recall of high-contribution keys without reverting to quadratic computation. Pre-scoring differs from token pruning and dynamic sparsification methods that remove tokens or reduce sequence length. We do not shorten the sequence or discard representations; instead, we construct a lightweight structural prior over keys that biases which key-query interactions are evaluated by the downstream approximate attention kernel. In particular, pre-scoring is a front-end selector that improves the recall of globally influential keys under a fixed interaction budget, while query-dependent routing (e.g., LSH bucketing in HyperAttention) still determines which interactions are actually computed within the retained set. 
Our pre-scoring is a query-independent selector that provides a per-layer structural prior over keys, and is complementary to streaming KV-cache pruning approaches including heavy-hitter retention and attention-sink preservation that keep tokens based on accumulated attention statistics across decoding. Unlike KV-cache pruning, we do not remove tokens from the context; instead we restrict which key-query interactions are evaluated inside an approximate attention kernel (e.g., HyperAttention). A direct comparison to streaming-specific baselines is valuable future work, but is not strictly matched to our full-layer replacement protocol.

This perspective unifies clustering-based filtering and leverage score selection under a common principle: both approximate the subset of keys contributing the largest mass in the attention matrix. Unlike LevAttention, which uses algebraic sketches, our approach leverages geometric clustering that better matches the empirical distributional structure of key embeddings in trained transformers.

\paragraph{Our Results.}
We evaluate clustering-based pre-scoring both as a standalone key-selection mechanism and when integrated with HyperAttention.

First, we compare clustering against leverage-score selection under identical key budgets. On a ViT-Large model (ImageNet-1k), selecting 128 keys via K-means retains \textbf{84.46\%} accuracy, compared to \textbf{77.17\%} for leverage-score sampling. This empirical gap supports our theoretical picture that geometric clustering can provide a tighter global importance prior than algebraic sketch-based selection under the same key budget.

Second, we analyze pre-scoring within HyperAttention. Two orthogonal mechanisms influence performance: (i) HyperAttention’s blockwise implementation improves numerical stability and training dynamics;  (ii) Pre-scoring increases recall of high-contribution keys. To isolate these effects, Table~\ref{tab:diff} disentangles pre-scoring from blockwise optimization. Pre-scoring alone accounts for the majority of perplexity reduction, while the optimization flag provides complementary gains. The best absolute perplexity (9.5 on LongBench) uses both mechanisms, but pre-scoring independently improves the original HyperAttention pipeline from 17.54 to 10.38, demonstrating that the gains are not merely implementation artifacts.
\begin{table}[t]
\centering
\caption{Disentangling pre-scoring from blockwise optimization.}
\label{tab:diff}
\begin{tabular}{lccc}
\toprule
Method & Pre-score & Blockwise Opt. & PPL \\
\midrule
FlashAttention & False & False & 5.6 \\
HyperAttention & False & False & 17.54 \\
HyperAttention & False & True & 13.41 \\
K-means+Hyper & True & False & \textbf{10.38} \\
K-means+Hyper & True & True & \textbf{9.53} \\
\bottomrule
\end{tabular}
\end{table}

Our objective is not to minimize perplexity at any computational cost, but to improve approximation quality \emph{under constrained key budgets}. We therefore treat \emph{unfiltered HyperAttention} (i.e., pre-scoring disabled) as a \emph{high-compute reference point}: it incurs the largest number of key--query interactions and can achieve the lowest perplexity. In contrast, pre-scoring is designed to \emph{shift the accuracy--efficiency frontier} by achieving lower perplexity than HyperAttention at the \emph{same} retained-key budget.

To provide theoretical grounding, we revisit the planted-subspace model introduced for LevAttention and show that our clustering-based pre-scoring recovers the same heavy-key sets with high probability. Thus, our method matches leverage-score guarantees in structured settings while using geometric rather than algebraic importance estimation.

\section{Preliminaries}

\paragraph{Positioning of Our Approach.}
Efficient attention methods differ primarily in \emph{how they select key–query interactions}. Most existing approaches focus on \emph{query-dependent locality}, using mechanisms such as locality-sensitive hashing, routing, or blockwise partitioning to restrict attention to nearby tokens. In contrast, our method introduces a \emph{query-independent global importance prior} over keys. This prior estimates which keys are structurally informative across queries, and can be combined with locality-based mechanisms such as HyperAttention. Thus, our approach operates along a complementary axis to routing-based or hashing-based attention.

We define the attention mechanism $Att = D^{-1} A V$. Here, $A$ is defined as $A := \exp(QK^\top)$, while $D$ is a diagonal matrix with $D_{i,i} = \|A_{i,:}\|_1$ for each $i \in [n] = \{1, 2, \ldots, n\}$. We refer to the matrix $A$ as the attention matrix. Explicitly calculating the attention matrix requires $\Theta(n^2 d)$ time and  $\Theta(n^2)$ memory, which can be prohibitive.

Han et al. introduce HyperAttention~\cite{han2023}, which addresses the quadratic bottleneck of vanilla self-attention by hashing queries and keys with an angular locality-sensitive hash (LSH) function, and then ordering buckets so that adjacent buckets have a small Hamming distance. Scores are computed only for pairs that fall into the same hash bucket, significantly reducing the time and memory cost of typical data distributions. Their method performs an additional randomized low-rank compression inside each bucket, further reducing constant factors.  Its main limitation is data independence: because bucket membership is determined solely by the hash function, weak but semantically crucial long-range links may never collide and can therefore be missed.

LevAttention~\cite{kannan2024} takes a complementary view.  It first sketches the key matrix in $O(nd)$ time to approximate statistical leverage scores for each row of the key matrix, then forms a Universal set $U=\{i:\operatorname{LS}(K_i)\ge\epsilon\}$ that, for polynomial-based attention (rather than the standard softmax attention), is guaranteed to contain all attention scores whose weight exceeds a user-chosen threshold $\epsilon>0$. Looking only at $U$ therefore achieves perfect recall of heavy attention scores, independent of positional locality.  However, when $\epsilon$ is very small, $|U|$ can be large as $n$, eliminating any savings.

Moreover, a uniform evaluation within the set $U$ wastes time in many pairs with low impact. By first constructing $U$ to ensure theoretical coverage and then applying HyperAttention’s locality-sensitive hashing inside this recall-guaranteed set, our hybrid approach combines the strengths of both methods: it retains subquadratic complexity and provably captures every $\epsilon$-heavy attention.

Our goal is to accelerate the approximation methods for transformers while maintaining high accuracy. Rather than using the universal set $U$ of  \citet{kannan2024}, we suggest other sets based on clustering methods to guide approximation algorithms such as HyperAttention \cite{han2023} to find large attention scores.

\section{Algorithm}\label{para:algo}

A central motivation for our pre-scoring approach is the hypothesis that computationally efficient methods, such as clustering, can effectively identify and prioritize the most salient keys within the key matrix $K \in \mathbb{R}^{n \times d_k}$ (where $n$ is sequence length and $d_k$ is the key dimension). This idea is supported by Axiotis et al. \cite{axiotis2024}. They developed a cluster-based sensitivity sampling method to enhance the data selection efficiency for large-scale model training. This principle resonates with techniques such as LevAttention \cite{kannan2024}, which use statistical leverage scores to sample influential data points. To provide rigorous grounding for clustering, and specifically K-means or K-median) as a pre-scoring mechanism, we analyze its performance and relationship to leverage scores under a structured data model.

\subsection{Pre-Scored HyperAttention}


Algorithm \ref{alg:prescore} ranks the $n$ keys in a single
pass.  Given a stochastically perturbed matrix $K'$ of keys, it offers two
routes: (i) a one-shot $k$-means/$k$-median call that returns the $s$
closest keys to $k=d{+}1$ centroids, or (ii) a fast leverage-score sketch. We set the number of clusters to $d{+}1$: one centroid per latent orthogonal direction ($d$) plus a single residual bucket for noise/no-signal keys. In the planted–subspace model in algorithm section, this matches the ground-truth partition and guarantees the within–cluster variance term is $O(\sigma_S^2)$ while the between–cluster gap remains $\Omega(1)$ by Lemma \ref{lem2}. Theorems \ref{thm1} and \ref{thm2} guarantee that both routes isolate all $\Theta(\epsilon)$-heavy keys with exponentially small failure probability. The full model and assumptions are given after Algorithm \ref{alg:psha}. Lines 3–7 execute in $O(nd_k)$ time for clustering, and $O(nd_k\!\log d_k)$ for leverage scores. 

\begin{algorithm}[t]
\caption{PreScore: Rank Keys via Clustering/Leverage}
\label{alg:prescore}
\begin{algorithmic}[1] 
\REQUIRE Keys $K \in \mathbb{R}^{n \times d_k}$, clusters $k=d+1$, method $\in\{\textsc{Kmeans},\textsc{Kmedian},\textsc{Leverage}\}$
\STATE $K' \leftarrow K + \mathcal{N}(0,\sigma^2 I_{d_k})$ \COMMENT{optional noise}

\IF{$method = \textsc{Kmeans}$ \OR $method = \textsc{Kmedian}$}
  \STATE $\{C_j, \mu_j\}_{j=1}^k \leftarrow \textsc{Kmeans}(K',k)$
  \STATE $S \leftarrow$ indices of the $s$ keys nearest to their centroids
\ELSE
  \STATE $h \leftarrow \textsc{ApproxLeverage}(K')$
  \STATE $S \leftarrow$ top-$s$ indices by $h$
\ENDIF

\STATE \textbf{return} $S$
\end{algorithmic}
\end{algorithm}

\begin{algorithm}[t]
\caption{Pre-Scored HyperAttention}
\label{alg:psha}
\begin{algorithmic}[1]
\REQUIRE Query $Q$, Key $K$, Value $V$; clusters $k=d+1$; noise $\sigma$; fallback threshold $\delta$; method $\in \{\textsc{Kmeans}, \textsc{Kmedian}, \textsc{Leverage}\}$
\STATE $S \leftarrow \textsc{PreScore}(K,k,s,\sigma,method)$ \COMMENT{see Algorithm~\ref{alg:prescore}}
\IF{$|S| < \delta n$}
  \STATE \textbf{return} $\textsc{HyperAttention}(Q,K,V)$ \COMMENT{robust fallback}
\ENDIF
\STATE $Att_{\text{out}} \leftarrow \textsc{HyperAttention}(Q, K[S], V[S])$
\STATE \textbf{return} $Att_{\text{out}}$
\end{algorithmic}
\end{algorithm}

\paragraph{Computational and implementation perspective.}
Pre-scoring is a lightweight filtering step applied once per attention layer using only the current key matrix $K$. The clustering route costs $O(n d_k \cdot k \cdot I)$ for $I$ iterations (with $k\ll n$ in practice), while leverage-style ranking costs $O(n d_k \log d_k)$. We do not backpropagate through clustering and introduce no additional trainable parameters: pre-scoring is purely a deterministic key-selection module. For autoregressive decoding, pre-scoring is performed during the \emph{prefill} stage; during token-by-token decoding we reuse this selection (or update it only periodically), avoiding an $O(n)$ clustering pass at every step. In all clustering experiments, we run k-means for a fixed small number of iterations (at most $I=10$) per layer.

\section{Structural Guarantees}\label{para:theo}


We now provide theoretical justification for the pre-scoring mechanisms introduced in Section~\ref{para:algo}. Under a planted subspace model, we show that both clustering-based and leverage-score-based ranking recover keys with large attention contribution (heavy keys). These results explain why simple geometric clustering can approximate more expensive algebraic influence measures.
\subsubsection{Matrix Structure and Assumptions}\label{sec:matrix_structure}
\emph{The guarantees in this section follow \citet{kannan2024} and apply to polynomial attention; softmax results are empirical. Unless stated otherwise: 
\begin{assumption}[Model and regularity]\label{assump:model}
Keys split into $S\cup N$ with means $\mu_S,\mu_N$, within-cluster variance $\le\sigma^2$, separation $\Delta=\|\mu_S-\mu_N\|_2$; analysis uses a degree-$r$ polynomial kernel; the selector retains $s$ keys per query; all probability statements are over the data and algorithmic randomness.
\end{assumption}
}

We give a new planted model for which cluster-based prescoring recovers all large leverage scores. 

Let $A \in \mathbb{R}^{n \times d}$ be a matrix generated as follows:

\begin{enumerate}

    \item There are $d$ disjoint sets of row indices, $S_1, \dots, S_d$, each of size $m = \lceil 1/\epsilon \rceil$ for some small $\epsilon \in (0, 1)$.

    \item Let $S_0 = \{1, \dots, n\} \setminus \bigcup_{j=1}^d S_j$ be the set of remaining row indices. We assume $n \gg dm$, so $|S_0| = n(1-o(1))$. 

    \item Let $v_1, \dots, v_d \in \mathbb{R}^d$ be an orthonormal basis for $\mathbb{R}^d$.

    \item For each $j \in \{1, \dots, d\}$ and every $i \in S_j$, first draw an \emph{unnormalized} vector $\tilde{A}_i = v_j + \delta_{i,j}$ with $\delta_{i,j} \sim \mathcal{N}(0, \sigma_S^2 I_d)$ i.i.d.  We then normalize it and set $A_i = \tilde{A}_i / \|\tilde{A}_i\|_2$.

    \item For each $i \in S_0$, sample $\tilde{A}_i = \eta_i$ where $\eta_i \sim \mathcal{N}(0, \sigma_N^2 I_d)$ i.i.d., and again normalize via $A_i = \tilde{A}_i / \|\tilde{A}_i\|_2$.

    \item The noise scales satisfy $\sigma_S^2 = c_S/d$ and $\sigma_N^2 = c_N/(n\epsilon)$ for sufficiently small positive constants $c_S, c_N$. This implies that the noise variance within groups $d\sigma_S^2 = c_S$ and within the noise group $d\sigma_N^2 = d c_N/(n\epsilon)$ are small.
    \item \emph{Row‑norm regularity:} 
  \(\|A_i\|_2 = 1\) for all \(i\).
    \item The model explicitly states conditions on correlations:
    \begin{itemize}
        \item[($P_1$)] $\forall j,l \in S, j\ne l,|A_{j}A_{l}^{T}|\le\delta_{1}\cdot \min(||A_{j}||_{2}^{2},||A_{l}||_{2}^{2})$
        \item[($P_2$)] $\forall j\in S,l\notin S,|A_{l}A_{j}^{T}|\le\delta_{2}\cdot \min(||A_{j}||_{2}^{2},||A_{l}||_{2}^{2})$
    \end{itemize}
     We assume as in LevAttention \cite{kannan2024} that $\delta_1$ and $\delta_2$ are sufficiently small constants. 
     

\end{enumerate}
\paragraph{Remark}
The correlation bounds (P1)–(P2) do not control row norms. Appendix \ref{appb} gives a construction with
$\delta_1=\delta_2=0$ where a few noise rows have norm $M\!\gg\!1$. Their $M^2$-scaled contributions dominate the
$k$-means objective, “stealing” clusters from the signal set $S$ and preventing recovery despite perfect orthogonality.
Thus, without enforcing $\|A_i\|_2=1$ for all $i$, clustering can fail even in the best-case correlation regime.
For the analysis below we therefore adopt Assumption~\ref{assump:model} (row-norm regularity): all rows are
$\ell_2$-normalized, i.e., $\|A_i\|_2=1$. In practice, key vectors are produced from LayerNorm/RMSNorm-normalized hidden states followed by a linear map, which substantially reduces row-norm variation. In our implementation, we additionally $\ell_2$-normalize keys before applying clustering-based pre-scoring, aligning the empirical pipeline with the row-norm regularity used in the analysis and preventing the outlier-dominated failure mode in Appendix~\ref{appb}.

Recall that for any row $i$, its leverage score is
$
h_i = A_i\,(A^\top A)^{-1}A_i^\top = \sup_{\|x\|=1}\frac{(A_i^\top x)^2}{\|A x\|^2}.
$ 
Based on our assumption, we have the following:
\begin{lemma}[Upper Bound on Noise Leverage]\label{lem1}
In this model, for each row $i\in S_0$, consider that $||A_i||^2=1$ we have
$h_i \le \frac{\|A_i\|^2}{\sigma_{\min}^2} = \frac{1}{\Theta(1/\varepsilon)} = O(\varepsilon).$

\end{lemma}

\begin{lemma}[Lower Bound on Signal Leverage]\label{lem2}
For each row $i\in S_j$, choose unit $x = v_j$.  Then
$
h_i \ge \frac{(A_i^\top v_j)^2}{\|A v_j\|^2}
= \Theta(\epsilon).
$
\end{lemma}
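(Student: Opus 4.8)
The plan is to prove Lemma~\ref{lem2} by exhibiting a single good test vector and lower-bounding the Rayleigh-type quotient it induces. We are asked to show that for $i \in S_j$, choosing the unit vector $x = v_j$ yields $h_i \ge (A_i^\top v_j)^2 / \|A v_j\|^2 = \Theta(\epsilon)$. The first step is immediate: since $h_i = \sup_{\|x\|=1} (A_i^\top x)^2/\|Ax\|^2$, plugging in $x=v_j$ gives the stated inequality $h_i \ge (A_i^\top v_j)^2/\|A v_j\|^2$ with no work. The real content is estimating the numerator and denominator of that fixed ratio.

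For the numerator, recall $A_i = v_j + \delta_{i,j}$ with $\delta_{i,j}\sim\mathcal N(0,\sigma_S^2 I_d)$ and $\sigma_S^2 = c_S/d$. Then $A_i^\top v_j = \|v_j\|^2 + \delta_{i,j}^\top v_j = 1 + \delta_{i,j}^\top v_j$, where $\delta_{i,j}^\top v_j \sim \mathcal N(0,\sigma_S^2) = \mathcal N(0,c_S/d)$, which is $o(1)$ with high probability (and in any case, one can invoke the row-norm regularity assumption $\|A_i\|_2=1$ together with a concentration bound to say $A_i^\top v_j = 1 - o(1)$). Hence $(A_i^\top v_j)^2 = \Theta(1)$. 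For the denominator, $\|A v_j\|^2 = \sum_{\ell=1}^n (A_\ell^\top v_j)^2$. I would split the sum by the group structure: (a) the $m = \lceil 1/\epsilon\rceil$ rows in $S_j$ each contribute $\Theta(1)$, totalling $\Theta(m) = \Theta(1/\epsilon)$; (b) the rows in $S_{j'}$ for $j'\neq j$ contribute $(v_{j'}^\top v_j + \delta_{\ell,j'}^\top v_j)^2 = (\delta_{\ell,j'}^\top v_j)^2 = O(\sigma_S^2)$ each by orthonormality, totalling at most $O(dm\sigma_S^2) = O(m c_S) = O(1/\epsilon)$ — same order, absorbed into the constant; (c) the $|S_0| = n(1-o(1))$ noise rows contribute $(\eta_\ell^\top v_j)^2$ each, which has expectation $\sigma_N^2 = c_N/(n\epsilon)$, so by concentration (e.g.\ a $\chi^2$-type / Bernstein bound on a sum of $n$ sub-exponential terms) the total is $\Theta(n\sigma_N^2) = \Theta(c_N/\epsilon)$. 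Adding the three pieces gives $\|A v_j\|^2 = \Theta(1/\epsilon)$, so the ratio is $\Theta(1)/\Theta(1/\epsilon) = \Theta(\epsilon)$, as claimed.

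The main obstacle is controlling the contribution of the $n$ noise rows in group $S_0$ to $\|Av_j\|^2$ and making sure it does not blow up: each $(\eta_\ell^\top v_j)^2$ is tiny in expectation, but there are $\Theta(n)$ of them, and one needs a concentration inequality for a sum of i.i.d.\ sub-exponential (squared Gaussian) random variables to guarantee the sum is $\Theta(n\sigma_N^2)=\Theta(1/\epsilon)$ rather than anomalously large. This is exactly the regime where the scaling $\sigma_N^2 = c_N/(n\epsilon)$ with $c_N$ a sufficiently small constant is designed to keep things in check, and a standard Bernstein or Laurent–Massart bound suffices. A secondary, lesser point is reconciling the injected Gaussian perturbations with the row-norm regularity assumption $\|A_i\|_2=1$ from item~7 of the model; I would simply treat the perturbed-then-normalized rows and note that normalization only changes $A_i^\top v_j$ by a $1\pm o(1)$ factor, which does not affect the $\Theta(\cdot)$ conclusion. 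Once the denominator estimate is in hand, the lemma follows by combining it with the trivial variational lower bound and the $\Theta(1)$ numerator estimate.
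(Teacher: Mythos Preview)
Your proposal is correct and follows the same approach as the paper's proof: plug in the unit vector $x=v_j$, show the numerator $(A_i^\top v_j)^2=\Theta(1)$, and show the denominator $\|Av_j\|^2=\Theta(1/\epsilon)$. Your treatment of the denominator is in fact more careful than the paper's, which simply asserts $\|Av_j\|^2 \approx \sum_{i\in S_j}1 = m$ without explicitly bounding the contributions from the other signal groups $S_{j'}$ and the noise group $S_0$ that you account for.
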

The above lemmas are standard; we refer to the supplementary for their proofs. 
Letting $A$ be the key matrix $K$, we have:
 $h_j = \sup_{\|x\|=1} \frac{(k_j^\top x)^2}{\|Kx\|^2},$ and $\|Kx\|^2 \ge \sigma_{\min}^2 = \Theta(1)$.
\paragraph{Connection to real key matrices.}
The planted–subspace model should be viewed as an explanatory zoom-lens, not as a literal generative process for every transformer layer.  In a trained model, most key vectors are well spread across the unit sphere;
consequently, any two randomly chosen keys have almost orthogonal directions, and each heavy key tends to sit near a distinct “axis” of that sphere.  This geometric picture mirrors items (P1)–(P2) of our
assumptions, where signal rows are approximately orthogonal to both
noise rows and to one another.  Empirically, clustering with $k=d{+}1$ isolates one centroid per such axis plus a single centroid for the residual cloud of light keys, exactly as the model predicts. Thus the theoretical guarantees provide an intuitive explanation for why clustering-based pre-scoring works on real transformers even when the data are only approximately, rather than exactly, in planted-subspace position.

\begin{theorem}[Leverage‐Score Separation]\label{thm1}
Let $A\in\mathbb{R}^{n\times d}$ satisfy Assumption~1 ($\|A_i\|_2=1$ for all $i$) and (P1)--(P2).
Assume the concentration $\sigma_{\min}^2(A^\top A)=\Theta(1/\varepsilon)$.
Let $S$ be the set of signal rows and $N=[n]\setminus S$ the noise rows.
Then there exist constants $0<C_{\mathrm{noise}}<C_{\mathrm{sig}}$,
depending only on the model parameters, such that with probability at least $1 - 1/\mathrm{poly}(d)$

\[
\max_{i\in N} h_i \le C_{\mathrm{noise}}\varepsilon
\qquad\text{and}\qquad
\min_{i\in S} h_i \ge C_{\mathrm{sig}}\varepsilon.
\]
Consequently, any threshold $\tau\in(C_{\mathrm{noise}}\varepsilon,\ C_{\mathrm{sig}}\varepsilon)$
perfectly separates noise from signal by leverage scores.
\end{theorem}


\begin{theorem}[K-means Clustering]\label{thm2}

Under the same assumptions as Theorem \ref{thm1}, and assuming $c_S$ and $c_N$ are sufficiently small (e.g., $c_S < 1/2$ and $d c_N/(n\epsilon) < 1/2$), with probability at least $1 - \exp(-\Omega(\min(m, n-dm, d)))$), the $k$-means algorithm with $k=d+1$,  applied to the rows of $A$ converges to a clustering where:

\begin{enumerate}

    \item There are $d$ clusters, $C_1, \dots, C_d$, such that for each $j \in \{1, \dots, d\}$, all rows in $S_j$ are assigned to cluster $C_j$. The centroid $\mu_j$ of $C_j$ satisfies $\|\mu_j - v_j\| = O(\sigma_S/\sqrt{m})$.

    \item There is one cluster $C_0$ containing all rows from $S_0$. The centroid $\mu_0$ of $C_0$ satisfies $\|\mu_0 - \mathbf{0}\| = O(\sigma_N/\sqrt{n-dm})$.

\end{enumerate}

\end{theorem}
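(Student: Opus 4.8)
The plan is to show that the planted partition $P^{*}=\{S_1,\dots,S_d,S_0\}$ is the essentially unique minimizer of the $k$-means objective and that its cluster means concentrate at the stated rates; ``converges to'' is then read as ``outputs'' (see the Lloyd caveat at the end). I would begin from the identity that minimizing the $k$-means cost is equivalent to \emph{maximizing} $\Phi(P):=\sum_{C\in P}\tfrac1{|C|}\bigl\|\sum_{i\in C}A_i\bigr\|^2$, since $\sum_i\|A_i\|^2$ is fixed. Writing $A_i=\bar A_i+\xi_i$ with $\bar A_i=v_j$ on $S_j$, $\bar A_i=\mathbf 0$ on $S_0$, and $\xi_i$ the corresponding Gaussian perturbation, orthonormality of the $v_j$ gives $\Phi(P)=\Phi_{\mathrm{sig}}(P)+2\,\mathrm{CR}(P)+\mathrm{VAR}(P)$ with $\Phi_{\mathrm{sig}}(P)=\sum_{C}\tfrac1{|C|}\sum_j|C\cap S_j|^2$, $\mathrm{VAR}(P)=\sum_C\tfrac1{|C|}\|\sum_{i\in C}\xi_i\|^2$, and $\mathrm{CR}$ the cross term. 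The signal part is purely combinatorial: for each $j$, $\sum_C\tfrac{|C\cap S_j|^2}{|C|}\le\sum_C|C\cap S_j|=m$, so $\Phi_{\mathrm{sig}}(P)\le dm$, with equality forcing $P$ to be \emph{planted-type}; with only $d{+}1$ clusters, splitting a block across two pure clusters leaves no cluster for $S_0$, and any macroscopic deviation (two blocks sharing a cluster, or a constant fraction of a block split off) loses a definite $\Omega(m)$ in $\Phi_{\mathrm{sig}}$.

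Next I would handle the randomness. For $P^{*}$ itself, $\mu_j=v_j+\tfrac1m\sum_{i\in S_j}\delta_{i,j}$ and $\mu_0=\tfrac1{|S_0|}\sum_{i\in S_0}\eta_i$, so standard Gaussian-norm concentration yields $\|\mu_j-v_j\|=O(\sigma_S/\sqrt m)$ and $\|\mu_0\|=O(\sigma_N/\sqrt{|S_0|})$ --- exactly the centroid claims --- and summing the within-cluster residuals gives $\mathrm{OPT}\le(1+o(1))(c_S+c_N)\,dm$ (using $|S_0|\sigma_N^2 d=(1+o(1))c_N d/\epsilon$). A $\chi^2_d$ tail bound with a union bound over the $n$ rows, using $c_S<1/2$, $dc_N/(n\epsilon)<1/2$, and $n\gg d/\epsilon$, shows that every signal row lies within $\sqrt{c_S}+o(1)<1$ of its $v_j$ and every noise row within $o(1)$ of $\mathbf 0$ (the row-norm regularity / small-$c_N$ hypothesis rules out the high-norm noise outliers of the Remark); hence the $d{+}1$ clouds $S_1,\dots,S_d,S_0$ are pairwise separated by a positive constant while each has diameter $o(1)$ or $<\sqrt2$.

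Because an optimal centroid is the mean of its cluster and hence lies in the (constant-radius) convex hull of the data, I would cover that ball by an $O(1)$-resolution net and union-bound the concentration of the $k$-means cost over the net configurations --- this is the source of the $\exp(-\Omega(d))$ factor. This forces the optimal centroids to lie within $o(1)$ of $\{\mathbf 0,v_1,\dots,v_d\}$, one per cloud: any configuration that does not have one center per cloud leaves some whole cloud $\Omega(1)$-far from every center, which, by the parallelogram bound $\|u-c\|^2+\|w-c\|^2\ge\tfrac12\|u-w\|^2$ applied to rows from the two well-separated clouds sharing a cluster, costs $\Omega(m)$ extra if that cloud is a signal block and $\Omega(n)$ if it is $S_0$; both beat the $O(\sqrt m)$ fluctuation of the cost ($n\gg dm$ is used for the $\Omega(n)$ case, $c_S<1/2$ for the $\Omega(m)$ case). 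With the centroids so localized, the constant separation forces the assignment step to return exactly $P^{*}$, and the combinatorial bound of the first paragraph gives the same conclusion for macroscopic deviations as a cross-check. Assembling the failure events --- $\exp(-\Omega(m))$ for the signal-block concentration and the $\Omega(m)$ margins, $\exp(-\Omega(n-dm))$ for the noise cluster, $\exp(-\Omega(d))$ for the net --- yields the stated $1-\exp(-\Omega(\min(m,n-dm,d)))$.

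The hard part will be precisely the passage from a \emph{per-configuration} cost comparison to a statement about the \emph{optimum}: there are $(d{+}1)^n$ partitions, far too many for a naive union bound, so one must localize the $d{+}1$ centroids first. A crude net over all centroids jointly incurs an $e^{O(d^2)}$ union-bound factor and thus cleanly yields the theorem's probability only under a mild side condition (something like $m\gtrsim d^2$); recovering it in the stated generality needs either a one-centroid-at-a-time localization against the empirical means of the others, or a VC-type count of the nearest-neighbor partitions induced by $d{+}1$ points in $\mathbb R^d$. Everything else --- the $\chi^2$ concentration, the parallelogram lower bounds, the $\Phi$-decomposition --- is routine. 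A last caveat: plain Lloyd reaches only a local optimum, so ``$k$-means converges to $P^{*}$'' should be read either as ``its global optimum (equivalently its unique stable fixed point) is $P^{*}$'', or as a guarantee under a separation-respecting initialization --- e.g.\ $k$-means++, which under the constant separation above seeds one point per cloud with high probability, after which the monotonicity of Lloyd keeps the iterate in $P^{*}$'s basin.
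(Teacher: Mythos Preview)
Your proposal is correct and is substantially more careful than the paper's own argument. The paper's proof proceeds by direct expected-distance comparison: it computes the inter-centroid separations ($\|v_j-v_k\|^2=2$, $\|v_j-\mathbf 0\|^2=1$) and the expected within-cluster squared radii ($c_S$ and $dc_N/(n\epsilon)$), verifies that under $c_S<1/2$ and $dc_N/(n\epsilon)<1/2$ each row is closer in expectation to its own true centroid than to any other, appeals to unspecified ``well-established results in clustering theory'' for uniqueness of the global optimum, and then derives the centroid concentration $\|\mu_j-v_j\|=O(\sqrt{c_S/m})$, $\|\mu_0\|=O(\sqrt{dc_N/(n^2\epsilon)})$ exactly as you do. It does not engage with the uniformity-over-partitions issue you identify, and it asserts convergence of Lloyd ``regardless of initialization strategy'' without justification.

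Your $\Phi$-maximization reformulation and the combinatorial bound $\Phi_{\mathrm{sig}}(P)\le dm$ (with equality exactly at $P^*$) give a structural reason, absent from the paper, why \emph{every} deviation is costly rather than only single-row swaps; the centroid-net step is precisely what would turn the paper's folklore appeal into a proof. Your $e^{O(d^2)}$ concern is legitimate, but it can be sidestepped in this model: once you have conditioned on the per-row event that every $A_i$ lies within its own cloud's radius (an $O(n)$-fold union bound on $\chi^2_d$ tails), the data are deterministic and the $d{+}1$ clouds are separated by a fixed constant; a purely deterministic pigeonhole plus your parallelogram bound then shows any non-$P^*$ partition pays an extra $\Omega(1)$ per misassigned row, which when summed over the $\Omega(m)$ (or $\Omega(n-dm)$) rows of the mishandled cloud beats the $O(dm\,c_S+(n-dm)d\sigma_N^2)$ cost of $P^*$. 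This recovers the paper's per-row reasoning in rigorous form without netting over centroid tuples. Your Lloyd caveat is well taken and marks a genuine overstatement in the paper's claim.
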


Running K-means with $k = d+1$ clusters on $\{k_j\}$ yields centroids $\mu_1, \dots, \mu_d, \mu_{d+1}$. Under the above separations, an optimal solution aligns with $C_i = S_i$ for $i = 1, \dots, d$ and $C_{d+1} = S$, since
$\sum_{k_j \in S_i} \|k_j - u_i\|^2 = 0, \quad \sum_{s \in S} \|s - \mu_{d+1}\|^2 \le \sum_{s \in S} \|s - \mu\|^2 \quad \forall \mu$,
and any deviation incurs a large additional within-cluster sum of squares (WSS).
In fact, by Theorem~\ref{thm1} the true partition $C_j=S_j$ ($j=1,\dots,d$) and $C_{d+1}=S_0$ has total within‐cluster cost
\[
\begin{aligned}
&\sum_{j=1}^d \sum_{i\in S_j}\|A_i - v_j\|^2 + \sum_{i\in S_0}\|A_i\|^2 \\
&= O(m\,\sigma_S^2) + O((n-dm)\,\sigma_N^2) \\
&= o(1)\,,
\end{aligned}
\]
whereas moving any single point to the wrong cluster incurs an additional penalty of at least
$
\min\{\|v_j-v_k\|^2,\|v_j\|^2\} - o(1) = 1 - o(1)> 0.
$
Hence any mis-assignment increases the total sum‐of‐squares, making the true grouping the unique global minimizer of the $k$‐means objective. Thus, clustering identifies those rows with $h_j \ge \epsilon$, matching LevAttention's heavy-key selection.

\begin{corollary}[Singleton case of Theorem 2]\label{corollary_1}
Setting \(m=1\) (so $\epsilon=1$ in this special case) in Theorem \ref{thm2} shows that, with the same
probability, the optimal \(k\)-means clustering with \(k=d+1\) places every
signal row in its own cluster and gathers all noise rows in \(C_0\).
\end{corollary}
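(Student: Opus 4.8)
The plan is to obtain Corollary~\ref{corollary_1} as a direct specialization of Theorem~\ref{thm2} to $m=1$, with one short combinatorial strengthening of the conclusion. First I would check that the hypotheses of Theorem~\ref{thm2} survive the substitution: with $m=1$ we have $\epsilon=1$, $|S_j|=1$ for each $j\in[d]$, $|S_0|=n-d$, and the noise scales become $\sigma_S^2=c_S/d$, $\sigma_N^2=c_N/n$; the smallness conditions $c_S<1/2$ and $dc_N/(n\epsilon)=dc_N/n<1/2$ and the separation $n\gg dm=d$ all hold unchanged. Plugging $m=1$ into the probability bound of Theorem~\ref{thm2} gives failure probability $\exp(-\Omega(\min(1,n-d,d)))$; I would remark that the only place the $\exp(-\Omega(m))$ term entered the proof of Theorem~\ref{thm2} was to control the within-$S_j$ fluctuation of the signal points, which is vacuous here (a singleton cluster has zero within-cluster cost for every realization of $\delta_{i,j}$), so the argument in fact goes through with failure probability $\exp(-\Omega(\min(n-d,d)))$, i.e. the ``same probability'' with the now-irrelevant $m$-term dropped.

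Next I would invoke Theorem~\ref{thm2} itself: on the good event the optimal $k$-means solution with $k=d+1$ has clusters $C_1,\dots,C_d$ with $S_j\subseteq C_j$ and a cluster $C_0\supseteq S_0$. To upgrade this to the corollary's claim I would use a counting argument: the sets $C_0,C_1,\dots,C_d$ partition $[n]$, the sets $S_0,S_1,\dots,S_d$ also partition $[n]$, and $S_j\subseteq C_j$ for every $j$ (including $j=0$); these two facts force $C_j=S_j$ for all $j$. Hence each $C_j$ ($j\ge 1$) is the singleton containing the unique row of $S_j$, and $C_0=S_0$ exactly, which is precisely ``every signal row in its own cluster, all noise rows gathered in $C_0$.''

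Alternatively --- and this is the version I would actually write out, since it is self-contained and makes the $m=1$ simplification transparent --- I would re-run the exchange argument of Theorem~\ref{thm2} directly. The candidate partition $(\{i_1\},\dots,\{i_d\},S_0)$ has $k$-means cost equal to $\sum_{i\in S_0}\|A_i-\mu_0\|^2$ alone, the singleton clusters contributing zero; this concentrates around $O((n-d)\sigma_N^2)=o(1)$ exactly as in the proof of Theorem~\ref{thm2}. Any competing partition into $d+1$ clusters must either (i) place two signal rows in a common cluster, (ii) place a signal row in a cluster with some noise rows, or (iii) split $S_0$ across two clusters while merging signal rows elsewhere. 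Using orthonormality ($\|v_j-v_k\|^2=2$, $\|v_j\|^2=1$) the offending cluster costs at least $1-o(1)$ in cases (i) and (ii), whereas the only ``gain'' from the freed-up cluster is re-partitioning $S_0$, which lowers cost by at most its total spread $o(1)$; case (iii) is covered by combining (i)/(ii) with this bound. So every competing partition has strictly larger cost and the true partition is the unique global minimizer.

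The main obstacle is the bookkeeping around the normalization $\|A_i\|_2=1$: with unit-norm rows the cross terms $A_{i_j}^{\top}\eta_\ell$ are not negligible from the variance alone, so to keep the ``$1-o(1)$'' penalty in cases (i)--(ii) and the ``$o(1)$'' cost of the true partition honest I would invoke the standard high-probability bound $|A_{i_j}^{\top}\eta_\ell|=O(\sqrt{(\log n)/d})$ that follows from $\eta_\ell$ being (close to) uniform on the sphere, together with $\chi^2$-concentration for $\sum_{i\in S_0}\|A_i-\mu_0\|^2$ and for $\|\mu_0\|$. Apart from this, the statement is a degenerate instance of the Theorem~\ref{thm2} computation and requires no new ideas.
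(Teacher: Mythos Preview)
Your proposal is correct and your second route --- the exchange argument comparing the cost of the true partition against any competing one via the $\|v_j-v_k\|^2=2$, $\|v_j\|^2=1$ gaps --- is essentially the paper's own proof: the paper observes that a singleton signal cluster has zero distortion, while merging a signal row with any other row $K_x$ forces $\|K_j-\mu\|_2^2\ge\tfrac12\|K_j-K_x\|_2^2\ge\tfrac12 D_{\min}$ for a positive constant $D_{\min}$, so any mis-assignment strictly raises the objective.

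Your first route (specialize Theorem~\ref{thm2}, then use the pigeonhole/partition argument $S_j\subseteq C_j$ for all $j\Rightarrow C_j=S_j$) is not in the paper but is a clean shortcut that avoids redoing the cost comparison; it buys you the result in two lines once Theorem~\ref{thm2} is in hand. Conversely, the paper's direct exchange argument (and its Appendix~D elaboration, which is phrased in the $(P_1)$--$(P_2)$ correlation model rather than the Gaussian one) has the advantage of being self-contained and of making the $m=1$ simplification --- zero within-cluster cost for singletons --- visible. Your remark that the $\exp(-\Omega(m))$ term becomes vacuous when $m=1$ is a nice observation the paper does not spell out, and your caution about the unit-norm constraint interacting with the Gaussian concentration bounds is warranted: the paper glosses over this tension as well.
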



\begin{proof}[Proof Sketch]
With \(m=1\), each heavy row \(K_j\in S\) contributes zero within‑cluster distortion when isolated.  If instead \(K_j\) is merged with any other row \(K_x\), the centroid error lower bound
\[
  \|K_j - \mu\|_2^2 \;\ge\; \tfrac12\|K_j - K_x\|_2^2 
  \;\ge\; \tfrac12D_{\min}
\]
applies, where \(D_{\min}>0\) is the minimum inter‑point separation guaranteed by Theorem~\ref{thm1}.  Since \(D_{\min}/2\) remains a positive constant, any such mis‑assignment strictly increases the total \(k\)‑Means cost.  Finally, choosing \(k=d+1\) reserves one centroid per heavy key and one for all noise rows, matching the leverage‑score separation. For a detailed proof, see Appendix \ref{appd} in the supplementary material.
\end{proof}

\paragraph{Why the assumptions are approximately satisfied in Transformers.}
The planted--subspace model is an explanatory abstraction rather than a literal generative process for every layer. However, several architectural features make its key assumptions reasonable in practice.First, in many Transformer blocks, keys are produced from LayerNorm-normalized hidden states (or other normalization variants) followed by a linear map; this discourages large row-norm variation and mitigates the failure mode in Appendix~\ref{appb}, which shows that $k$-means can be dominated by a small number of high-norm outliers even when correlations are tiny.Second, empirically, ``heavy'' attention often concentrates on a relatively small set of semantically salient tokens (e.g., delimiters, anchors, entities, or long-range cues), which can be idealized as a small number of prominent directions,
while the remaining keys form a diffuse background cloud; this matches the signal-noise decomposition $S\cup N$. Third, high-dimensional embeddings are typically close to orthogonal in aggregate, so approximate versions of (P1)--(P2)
are a plausible coarse model. Consequently, the structural guarantees below should be interpreted as explaining why a geometric proxy such as clustering can recover heavy-key sets under mild regularity, not as requiring exact planted structure.

\paragraph{Connection to the planted model and the constant gap $D_{\min}$.}
The Gaussian planted model studied in Section 4 of LevAttention\cite{kannan2024} draws signal and noise rows from two covariance profiles whose variances differ by a fixed ratio.  Standard concentration shows that this forces a constant lower bound 
$$
D_{\min}=2(1-\vartheta_1)\!\ge\!\tfrac32
$$
on the squared Euclidean distance between any signal and any noise row whenever the variance-ratio parameter satisfies $\vartheta_1<\tfrac14$.  That same constant gap is exactly what drives Corollary~\ref{corollary_1}: putting a signal row into a mixed cluster would increase the $k$-means objective by at least $D_{\min}/2$, so the optimal solution (with $k=d+1$) must isolate every signal row and pool all noise rows in $C_0$. Thereby, the planted model supplies a probabilistic guarantee for the deterministic separation our corollary needs, confirming that the singleton clustering phenomenon emerges naturally whenever the signal-to-noise variance ratio is sufficiently small.

In addition to our theorems above, this framework extends naturally to any $\ell_p$ norm where $p > 0$, allowing us to generalize our pre-scoring mechanism. Consider a key matrix $K = \{k_j\}_{j=1}^n \subset \mathbb{R}^{d_k}$ drawn from a mixture of $d$ well-separated “heavy” centers and a bulk of “light” points, as in our structured data model. We define the $\ell_p$-sensitivity of each key $k_j$ following \cite{padmanabhan2023}, which quantifies the importance of keys under the $\ell_p$ norm.

To adapt our clustering approach, we employ Minkowski-$k$-means, which minimizes the following:
$\sum_{j=1}^n \min_{i \in [d]} \|k_j - \mu_i\|_p^p.$ This method clusters keys by minimizing the $p$-th power of their $\ell_p$ distances to the nearest centroid. Under the same separation conditions as in Lemma \ref{lem2}, Minkowski-$k$-means\cite{oti2021} accurately recovers the top-heavy keys according to their $\ell_p$-sensitivity. Specifically, we show:
\\
\begin{claim}[$\ell_p$-Generalization]\label{claim1}
Under the structured planted model mentioned above (with disjoint signal sets $S_1,\dots,S_d$, noise set $S_0$, and  
$A_i = v_j + \delta_{i,j}$ for $i\in S_j$, $A_i=\eta_i$ for $i\in S_0$, and noise scales  
$\sigma_S^2=c_S/d$, $\sigma_N^2=c_N/(n\epsilon)$), running $k$-means with the $\ell_p$ metric (i.e.,\ using distances $\|x - y\|_p^p$) and $k=d+1$ recovers exactly the true clusters $S_1,\dots,S_d,S_0$, provided $c_S$ and $c_N$ are sufficiently small.
\end{claim}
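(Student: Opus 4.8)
The plan is to replay the proof of Theorem~\ref{thm2} with the Euclidean cost $\|\cdot\|_2^2$ replaced by the Minkowski cost $\|\cdot\|_p^p$, carefully tracking the dimension-dependent factors that appear when $p\neq2$. As in the Euclidean case the argument has three parts: (i) a high-probability upper bound $\beta$ on the $\ell_p^p$ cost of the true partition $(S_1,\dots,S_d,S_0)$; (ii) a lower bound $D_{\min}^{(p)}$ on the $\ell_p^p$ separation between distinct planted centers, and between a planted center and $\mathbf{0}$; and (iii) a counting argument showing that, with exactly $k=d+1$ clusters, the only partition whose objective value falls below the separation gap is the true one.

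For step (i), fix a group $S_j$. Each signal row obeys $A_i-v_j=\delta_{i,j}\sim\mathcal{N}(0,\sigma_S^2 I_d)$, so $\|A_i-v_j\|_p^p=\sum_{\ell=1}^d|\delta_{i,j,\ell}|^p$ is a sum of $d$ i.i.d.\ sub-Weibull variables with mean $\sigma_S^p\,\mathbb{E}|Z|^p$, $Z\sim\mathcal{N}(0,1)$. Because the $\ell_p$-optimal centroid is not the arithmetic mean in general (it is the coordinatewise median for $p=1$, with no closed form otherwise), I will not compute it: I only use that the optimal within-cluster cost is at most the cost of placing the center at $v_j$, i.e.\ $\sum_{i\in S_j}\|\delta_{i,j}\|_p^p$. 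A Bernstein/sub-Weibull concentration bound, applied over the $d$ coordinates of each row and then over the $m=\lceil1/\epsilon\rceil$ rows of $S_j$, bounds this by $(1+o(1))\,m\,d\,\sigma_S^p\,\mathbb{E}|Z|^p$ outside an event of probability $\exp(-\Omega(m))$; the analogous bound for $S_0$ gives $(1+o(1))(n-dm)\,d\,\sigma_N^p\,\mathbb{E}|Z|^p$ outside probability $\exp(-\Omega(n-dm))$. Substituting $\sigma_S^2=c_S/d$ and $\sigma_N^2=c_N/(n\epsilon)$ and union-bounding over the $d+1$ groups yields $\beta=O\!\big(m\,d^{1-p/2}c_S^{p/2}\big)+O\!\big((n-dm)\,d\,(n\epsilon)^{-p/2}c_N^{p/2}\big)$ with failure probability $\exp(-\Omega(\min(m,n-dm,d)))$ (the $d$ term coming, as in Theorem~\ref{thm2}, from the matrix concentration that also controls the centroid locations). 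This is the exact $\ell_p$ analogue of the $O(m\sigma_S^2)+O((n-dm)\sigma_N^2)=o(1)$ estimate in Theorem~\ref{thm2}.

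For steps (ii)--(iii), I will lower bound $D_{\min}^{(p)}=\min_{j\neq k}\|v_j-v_k\|_p^p\wedge\min_j\|v_j\|_p^p$ using the norm comparisons $\|x\|_p\ge\|x\|_2$ for $p\le2$ and $\|x\|_p\ge d^{1/p-1/2}\|x\|_2$ for $p\ge2$, together with $\|v_j\|_2=1$ and $\|v_j-v_k\|_2=\sqrt{2}$, and then pick the smallness thresholds on $c_S,c_N$ so that $\beta\ll D_{\min}^{(p)}/2^{\max(p-1,0)}$, where the factor $2^{\max(p-1,0)}$ is that of the inequality $\|x-y\|_p^p\le 2^{\max(p-1,0)}\big(\|x-z\|_p^p+\|z-y\|_p^p\big)$ (for $0<p\le1$ this is just the subadditivity of $t\mapsto t^p$). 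This inequality forces any candidate cluster that contains rows from two distinct true groups to pay at least $D_{\min}^{(p)}/2^{\max(p-1,0)}-o(1)>\beta$ on one of those rows, so in an optimal $(d+1)$-clustering every cluster is contained in a single true group; a pigeonhole count over the $d+1$ nonempty true groups then forces the optimal partition to equal $(S_1,\dots,S_d,S_0)$, and its centroids lie within $O(\sigma_S/\sqrt{m})$ resp.\ $O(\sigma_N/\sqrt{n-dm})$ of $v_j$ resp.\ $\mathbf{0}$, as in Theorem~\ref{thm2}.

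The step I expect to be the main obstacle is guaranteeing that the separation gap genuinely survives these $p$-dependent factors. For $p\ge2$ the per-row signal cost carries a harmless $d^{1-p/2}=O(1)$ factor and $D_{\min}^{(p)}$ is of comparable order, so ``$c_S,c_N$ sufficiently small'' works essentially verbatim, modulo the same relation between $c_N$ and $\epsilon$ that already underlies Theorem~\ref{thm2}. For $0<p<2$, however, $\mathbb{E}\|\delta_{i,j}\|_p^p\asymp c_S^{p/2}d^{1-p/2}$ \emph{grows} with $d$ while $\|v_j\|_p^p$ can be as small as $1$ (e.g.\ when the $v_j$ are standard basis vectors), so the gap can close for fixed constants; keeping the proof honest then requires either restricting to $p\ge2$, assuming the planted directions are ``spread'' so that $\|v_j\|_p^p\asymp d^{1-p/2}$ too, or letting $c_S,c_N$ decay with $d$ and $n$. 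I would accordingly state the clean version for $p\ge2$ and record the quantitative smallness condition needed for $p<2$; the rest is a routine transcription of the Euclidean argument and of Lemmas~\ref{lem1}--\ref{lem2}.
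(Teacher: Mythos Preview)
Your proposal is correct and follows the same separation argument as the paper's own proof: bound the within-cluster $\ell_p^p$ radii, compute the inter-center $\ell_p^p$ gaps, and show that any misassignment overshoots the gap, so with $k=d+1$ clusters pigeonhole forces the true partition. The only cosmetic difference is that the paper compares a single misassigned row's cost to the \emph{per-point} radius $\delta_{\max}$ (and computes $\Delta_{\min}$ directly by taking the $v_j$ to be standard basis vectors), whereas you compare it to the \emph{total} true-partition cost $\beta$ and bound $D_{\min}^{(p)}$ via norm comparisons; both versions go through under ``$c_S,c_N$ sufficiently small,'' your route simply demanding slightly smaller constants. You are in fact more careful than the paper on two technical points: you invoke the correct $\ell_p^p$ quasi-triangle constant $2^{\max(p-1,0)}$ (the paper's line $\|A_{i_*}-v_k\|_p^p\ge\|v_j-v_k\|_p^p-\|A_{i_*}-v_j\|_p^p$ is not a valid inequality for $p$-th powers as written), and you explicitly flag that for $p<2$ the factor $d^{1-p/2}$ in the per-row deviation can swamp a constant separation unless the smallness thresholds are allowed to depend on $d$---a caveat the paper suppresses when it asserts $\delta_{\max}=O(c_S^{p/2}d^{1-p/2})=o(1)$.
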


For a brief proof of this claim, we replace every squared-norm ($p=2$) in the analysis of lemma~\ref{lem1} with the $p$-th power norm, and the separation conditions ensure that the heavy centers remain distinguishable. Based on our proof in \ref{claim1_proof}, we conclude:
In the $\ell_p^p$ metric one checks that
$
\|v_j - 0\|_p^p = 1$
and 
$\|v_j - v_k\|_p^p =2$
 for $j\neq k$ so the minimum inter-centroid separation is $\Delta_{\min}=1$, while standard bounds show that each point’s $p$-th power deviation $\delta_{\max}=O(c_S^{p/2}d^{1-p/2})=o(1)$. Hence any mis-assignment raises the $k$-means cost by at least $\Delta_{\min}-2\delta_{\max}>0$, making the true partition the unique global minimizer.
Since $\delta_{\max}=o(1)$ holds with probability at least $1-\exp(-\Omega(\min(m,n-dm)))$, $\ell_p$-$k$-means recovers the true clusters with probability at least $1 - 1/\mathrm{poly}(d)$.

This generalization enables our method to prioritize informative keys under various $\ell_p$ metrics, which may be advantageous for different data distributions or model architectures. By restricting the queries to attend only to the sampled keys selected by the method above, we ensure focused attention mechanisms while preserving the framework's adaptability. We provide both deterministic and probabilistic analyses of this connection in Appendix \ref{appc} of supplementary material.
\section{Experimental Evaluations}
To evaluate our proposed pre-scoring attention algorithm, we conducted comprehensive experiments assessing runtime efficiency, perplexity performance, and applicability to Vision Transformers (ViTs) \cite{dosovitskiy2021vit}. We compared three variants—K-means+Hyper, K-median+Hyper, and Lev+Hyper—against baseline HyperAttention \cite{han2023} and FlashAttention \cite{dao2022}. We also analyzed the key clustering performance of K-means relative to ViT's standard self-attention. All the experiments were done on a single NVIDIA A100 GPU with 40 GB memory or a single NVIDIA L4 GPU with 24 GB memory.

\subsection{Speed Comparison on Flash Attention}

    

\begin{figure}[t]
    \centering
    \begin{subfigure}{\columnwidth}
        \centering
        \includegraphics[width=0.9\linewidth]{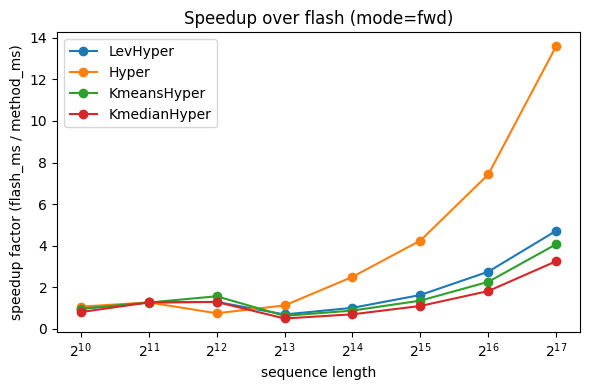}
        \caption{Forward pass only.}
        \label{fig:image1}
    \end{subfigure}

    \vspace{1mm}

    \begin{subfigure}{\columnwidth}
        \centering
        \includegraphics[width=0.9\linewidth]{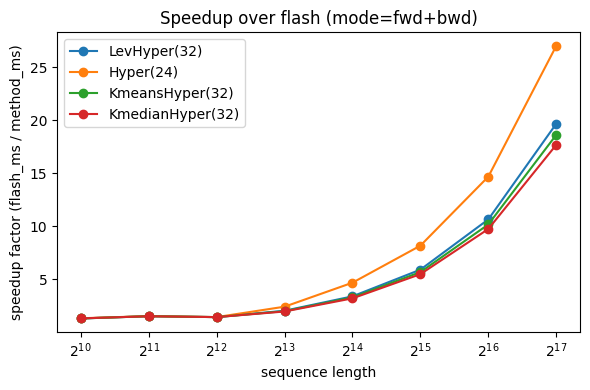}
        \caption{Forward + backward pass.}
        \label{fig:speed_fwdbwd}
    \end{subfigure}

    \caption{Single-layer speedup over FlashAttention for HyperAttention and pre-scored variants.}
    \label{fig:image2}
\end{figure}

 FlashAttention is the gold-standard for exact softmax attention throughput, so we report speed relative to it. HyperAttention already provides substantial wall-clock acceleration at long context lengths, and we examine whether pre-scoring methods preserve this advantage while improving modeling quality.
 
 We tested the speed-up factor for each layer compared to that of Flash Attention. This speedup factor is the ratio of Flash Attention's runtime to that of each tested method on a per-layer basis. From the results, all combinations, Lev + Hyper, K-means + Hyper and K-median + Hyper, outperform Flash Attention for sufficiently long sequences, demonstrating the advantage of HyperAttention-based methods. Compared to the original HyperAttention, these methods can generate mild acceleration, with the performance becoming more remarkable from $N=2^{13}$ with a speedup factor of around $3$ to $4$ in Figure~\ref{fig:image1}. Such tradeoffs are from the time complexity of the pre-scoring algorithm in forward selection parts, which is $O(N\cdot d^2)$ for LevAttention method\cite{kannan2024} and $O(N\cdot d \cdot k)$ for K-means/median, where $k$ is the number of clusters in the K-means/median part. Because of the huge size of the key matrix, having $d \gg k$ is a common situation. In this case, it is also reasonable to conclude that the additional complexity is roughly $O(N\cdot d)$, a near-linear complexity dependent solely on the dimensions of $K$. Comparing these two options, Lev+Hyper exhibits the best scalability, closely matching HyperAttention on longer sequences, while K-median+Hyper requires slightly higher computational cost due to its clustering complexity. Our pre-scoring overhead is most pronounced in the forward pass, as the backward pass adheres to HyperAttention’s standard pipeline. As a result, extending pre-scoring to both passes could potentially narrow the speedup factor.

\subsection{Accuracy Comparison}

\paragraph{Corrected Coupling (GLM3).}
After resolving residual coupling artifacts present in early experiments, we evaluate the corrected pre-scoring integration (GLM3). Figure~\ref{fig:image4} shows that perplexity decreases monotonically as the number of retained keys increases, consistent with the efficiency–accuracy tradeoff discussed in this section.

While their speedup factors grow more gradually, our HyperAttention-based methods offer a balanced trade-off between runtime and perplexity, as observed in the speed comparison. Similarly to HyperAttention's test \cite{han2023}, we use the \textit{LongBench} dataset \cite{bai2023longbench} and evaluate on \textit{ChatGLM2-6B-32k}~\cite{zeng2024chatglm2} and \textit{ChatGLM3-6B-32k}~\cite{zeng2024chatglm3}. Hereafter we refer to them as \textbf{GLM2} and \textbf{GLM3}, respectively. To compare the efficacy of different methods under full-layer replacement, we incorporate K-means/K-median and LevAttention with our scoring mechanism before sending the scored data to the HyperAttention algorithm.

To illustrate the sensitivity of approximate attention to masking/coupling choices, we include an ablation on ChatGLM2 in Appendix~\ref{appf}; all main-text results use the corrected integration (GLM3).

\paragraph{Results on ChatGLM3-6B-32k.}
We evaluate our method on \textit{ChatGLM3-6B-32k} under the same LongBench/full-layer protocol as GLM2. To avoid implementation artifacts observed on GLM2, we use a corrected coupling of pre-scoring and HyperAttention (details in Appendix~\ref{appf}). All main-text results use a corrected integration of pre-scoring with HyperAttention. 
Concretely, we (i) apply the pre-scoring selection via an attention bias mask rather than zeroing keys/values, 
(ii) scale residual Monte-Carlo samples by the effective retained key count, and (iii) exclude blockwise-selected keys from the residual path to avoid double counting. This preserves the geometry of the key space and yields stable perplexity–budget curves (Figure~\ref{fig:image4}).

\begin{figure}[h!]
    \centering
    \graphicspath{{./pictures/}}  
    \includegraphics[width=\linewidth]{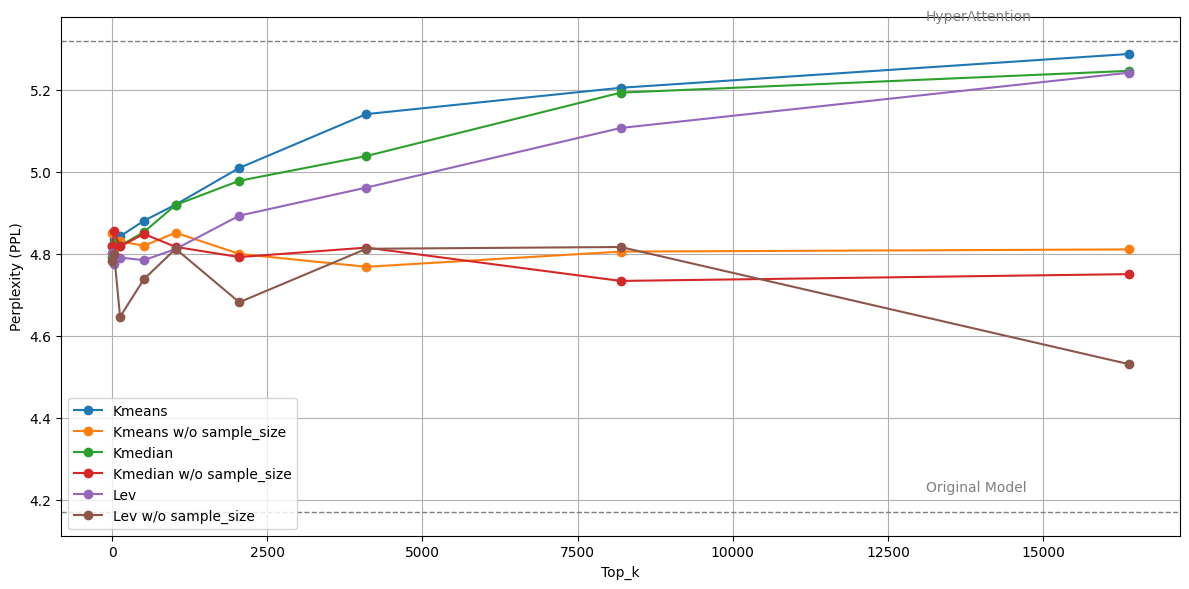}
    \caption{GLM3: perplexity vs.\ top-$k$ for K-means/K-median/Leverage, with/without residual sampling.}
    \label{fig:image4}
\end{figure}

Some vital observations are listed here:
\begin{samepage}
\begin{itemize}
    \item \textbf{Small $k$ saturates:} Curves are flat at low $k$ and lie \emph{below} vanilla HyperAttention, indicating pre-scoring mainly \emph{denoises} and a few hundred keys capture most mass.
    \item \textbf{Residual vs.\ no-residual:} With residual sampling, curves increase mildly as $k$ grows; without residuals they stay stable with small dips at very low $k$, consistent with block-diagonal attention already capturing most mass.
    \item \textbf{Runtime:} Lev scoring is light and masking is applied as bias, so runtime varies weakly with $k$; K-means/K-median incur more overhead as $k$ increases.
\end{itemize}
\end{samepage}
\paragraph{Efficiency–Accuracy Tradeoff.}
The lowest perplexity is observed when $k=0$, where no filtering is applied and the full HyperAttention pipeline is used. However, this regime also incurs the highest computational cost. The purpose of pre-scoring is to improve modeling quality \emph{at reduced key budgets}, not to outperform the full-compute baseline. 

Across a wide range of $k$, our method achieves substantially lower perplexity than HyperAttention operating under the same key constraints. In this sense, pre-scoring shifts the accuracy–efficiency frontier, enabling better performance for a given computational budget.

\subsection{Zero-Shot Substitution Vision-Transformer}
We replaced the standard softmax-based self-attention layers in the Vision Transformer (ViT) \cite{dosovitskiy2021vit} with our proposed K-means sampling attention mechanism by letting queries $Q$ only attend to a subset $S$ of keys $K$ chosen by our algorithm \ref{alg:prescore}. We also mask the value matrix $V$ with our subset $S$ to align with the original output shape. For the baseline, we used the pretrained \textit{vit\_small\_patch16\_224} and \textit{vit\_large\_patch16\_224} models, which achieved top-1 accuracies of 85.11\% and 85.85\%, respectively, in the ImageNet-1k validation set \cite{deng2009imagenet}. Our custom attention mechanism replaces the original full computation with a K-means clustering approach that samples a subset of key vectors per head. We varied the number of clusters and sampled keys to evaluate the performance. In the ViT-Small variant, when the number of clusters was fixed at 4, reducing the number of sampled keys to 32 resulted in a drastic accuracy drop (31.34\%), while increasing the sample count to 64, 96, and 128 gradually improved the accuracy to 61.31\%, 74.21\%, and 80.05\%, respectively. A further increase to 6 clusters with 128 sampled keys yielded a marginal improvement (80.49\%). In contrast, the ViT-Large variant exhibited higher robustness: with 4 clusters, the accuracy improved from 53.05\% with 32 samples to 78.10\%, 82.89\%, and 84.46\% for 64, 96, and 128 samples, respectively, while increasing the cluster count to 6 with 128 sampled keys maintained an accuracy of 84.46\%. These results, summarized in Table~\ref{tab:vit_results}, indicate that our K-means sampling attention can closely approximate the performance of the full self-attention mechanism if a sufficient number of key vectors is sampled. Our method achieves better performance than LevAttention \cite{kannan2024} with pretrained ViT models and share similar accuracy trade-offs to LevAttention with models trained from scratch using their updated leverage-score based attention mechanism. More detailed results of LevAttention are included in Appendix \ref{appe}.

\begin{table}[h!]
\centering
\caption{Accuracy of zero-shot substitution ViT with K-means prescoring (higher is better).}
\label{tab:vit_results}
\small
\setlength{\tabcolsep}{6pt}
\begin{tabular}{@{}lcc@{}}
\toprule
\textbf{Configuration} & \textbf{S/16 Acc.} & \textbf{L/16 Acc.} \\
\midrule
Base model                   & 85.11\% & 85.85\% \\
num\_cluster=4,\; num\_sample=32   & 31.34\% & 53.05\% \\
num\_cluster=4,\; num\_sample=64   & 61.31\% & 78.10\% \\
num\_cluster=4,\; num\_sample=96   & 74.21\% & 82.89\% \\
num\_cluster=4,\; num\_sample=128  & 80.05\% & \textbf{84.46\%} \\
num\_cluster=6,\; num\_sample=128  & 80.49\% & \textbf{84.46\%} \\
\bottomrule
\end{tabular}
\end{table}

\paragraph{Scope of baselines.}
Our experiments focus on full-layer replacement within the HyperAttention pipeline and direct comparison to LevAttention, which are the most structurally aligned baselines for query-independent key prioritization under a fixed interaction budget. Methods such as DuoAttention and streaming KV-cache pruning target different regimes (retrieval-augmented heads, streaming constraints, or cache eviction policies) and are not directly matched to our evaluation protocol. We therefore treat comparisons to streaming/pruning baselines as complementary future work and include a discussion in Appendix~\ref{apph}.

\section{Conclusion}
We introduced a pre-scoring mechanism that adds a query-independent importance prior to hierarchical attention. By selectively prioritizing informative keys, we overcome limitations of HyperAttention's uniform residual sampling, achieving significant perplexity improvements and computational efficiency advantages over FlashAttention\cite{dao2022}. Empirical results across both models validate our method’s effectiveness and generality. Mathematically, we also used planted-subspace model to prove that our clustering-based scoring is as powerful as leverage score pre-scoring for natural models.

\bibliography{icml2026}
\bibliographystyle{icml2026}

\newpage
\appendix

\section{Appendix A: Perplexity Comparison Across Configuration}\label{appa}

In this section, we give tables to represent the data in our experiments for comparing accuracy. Both K-means and K-median prescoring achieve their best performance at \verb|top-k = 2048| with sampling. Lev+Hyper method reaches best its PPL performance at \verb|top-k=8192|. If we allow \verb|min-seq-len>=n_query|, their best perplexity will improve to about 9.5 at \verb|top-k=8192|. We also confirmed that the \verb|top-k| corresponding to the best PPL performance is between 2048 and 8192. 
Also, when we set \verb|sample_size=0|, the experiments show the best perplexity when \verb|top_k=0|. Given these two conditions, our accuracy further improves to $8.3081$ for all filtering methods when \verb|min-seq-len>=n_query|, which further improves to 30.8\% compared to HyperAttention.

When pre-scoring is disabled (unfiltered HyperAttention: \verb|top_k|=0, \verb|sample_size|=0, we observe the lowest perplexity. Under the additional setting \verb|min_seq_len >= n_query|, perplexity reaches 8.3081; this gain is attributable to bypassing the fallback path / enabling blockwise optimization, not to pre-scoring. For this reason, we do not claim 8.3081 as a pre-scoring improvement, and we evaluate pre-scoring primarily under constrained key budgets.”

\subsection{On the U-Shaped Performance Curve and \texttt{top\_k=0} Result}
We observe a U-shaped performance curve in our PPL experiments (see Tables~\ref{tab:ppl_K-means}, \ref{tab:ppl_K-median}, \ref{tab:ppl_lev}). This can be explained by a trade-off between capturing sufficient signal and introducing noise.
\begin{itemize}
    \item At a \textbf{low \texttt{top-k}}, the model fails to select a sufficient number of informative keys, leading to higher perplexity as crucial information is missed.
    \item At a \textbf{very high \texttt{top-k}} (e.g., 16384), the pre-scoring selects an excessive number of keys. This can introduce noise and less relevant information that degrades the performance of the subsequent HyperAttention stage, causing the perplexity to rise again. 
    \item The \textbf{optimal performance} is achieved at a balance point (empirically found between 2048 and 8192) where the most salient keys are captured without introducing excessive noise.
\end{itemize}
The strong performance at \texttt{top\_k=0} with \texttt{sample\_size=0} and \verb|min-seq-len>=n_query| is a special case. Here, pre-scoring is deactivated, and the model relies solely on the original HyperAttention mechanism. The performance gain to 8.3081 comes from the \verb|min-seq-len>=n_query| setting, which ensures the model fully utilizes blockwise optimizations even at shorter sequence lengths, rather than from the pre-scoring itself.

\begin{table}[h!]
\centering
\caption{PPL comparison for \textbf{K-means} across configurations. }
\small
\begin{tabular}{lrrrr}
\toprule
\textbf{Top K} & \textbf{Sample Size} & \textbf{PPL} & \textbf{PPL\textsuperscript{*}} \\
\midrule
0 & 256 & 17.5419 & 13.4143 \\
32 & 256 & 17.3717 & 14.2691 \\
128 & 256 & 15.7498 & 14.7510 \\
512 & 256 & 11.7709 & 10.3013 \\
2048 & 256 & \textbf{10.3837} & 10.0160 \\
8192 & 256 & 10.5371 & \textbf{9.5313} \\
16384 & 256 & 11.9027 & 10.7297 \\
\midrule
0 & 0 & \textbf{10.4122} & \textbf{8.3081} \\
32 & 0 & 10.4014 & 8.3460 \\
128 & 0 & 10.9531 & 8.3633 \\
512 & 0 & 11.1941 & 8.6657 \\
2048 & 0 & 12.1078 & 9.3075 \\
8192 & 0 & 23.7752 & 12.3623 \\
16384 & 0 & 27.1459 & 21.9300 \\
\bottomrule
\end{tabular}
\begin{minipage}{\linewidth}
\footnotesize
\textsuperscript{*}PPL for sequences with length $\geq$ n\_query
\end{minipage}
\label{tab:ppl_K-means}
\end{table}

\begin{table}[h!]
\centering
\caption{PPL comparison for \textbf{K-median} across configurations.}
\begin{tabular}{lrrrr}
\toprule
\textbf{Top K} & \textbf{Sample Size} & \textbf{PPL} & \textbf{PPL\textsuperscript{*}} \\
\midrule
0 & 256 & 17.5435 & 13.4139 \\
32 & 256 & 17.5589 & 14.3638 \\
128 & 256 & 15.1726 & 14.2688 \\
512 & 256 & 12.6928 & 10.7822 \\
2048 & 256 & \textbf{10.4396} & 10.6784 \\
8192 & 256 & 10.5228 & \textbf{9.6705} \\
16384 & 256 & 12.0311 & 10.6668 \\
\midrule
0 & 0 & \textbf{10.4122} & \textbf{8.3081} \\
32 & 0 & 10.5020 & 8.3319 \\
128 & 0 & 10.6929 & 8.3912 \\
512 & 0 & 10.9729 & 8.5140 \\
2048 & 0 & 11.6279 & 8.9240 \\
8192 & 0 & 20.9296 & 12.3335 \\
16384 & 0 & 22.5637 & 18.5101 \\
\bottomrule
\end{tabular}
\begin{minipage}{\linewidth}
\footnotesize
\textsuperscript{*}PPL for sequences with length $\geq$ n\_query
\end{minipage}
\label{tab:ppl_K-median}
\end{table}

\begin{table}[h!]
\centering
\caption{PPL comparison for \textbf{Leverage Score-Based Method} across configurations.}
\begin{tabular}{lrrrr}
\toprule
\textbf{Top K} & \textbf{Sample Size} & \textbf{PPL} & \textbf{PPL\textsuperscript{*}} \\
\midrule
0 & 256 & 17.5428 & 13.4129 \\
32 & 256 & 21.3402 & 15.7013 \\
128 & 256 & 21.3568 & 17.0048 \\
512 & 256 & 15.0292 & 13.4522 \\
2048 & 256 & 11.4189 & 9.8549 \\
8192 & 256 & \textbf{10.6066} & \textbf{9.4091} \\
16384 & 256 & 12.1050 & 10.5868 \\
\midrule
0 & 0 & \textbf{10.4122} & \textbf{8.3081} \\
32 & 0 & 10.6251 & 8.4462 \\
128 & 0 & 11.1715 & 8.4504 \\
512 & 0 & 11.4453 & 8.6360 \\
2048 & 0 & 12.3255 & 9.1102 \\
8192 & 0 & 23.7757 & 13.4991 \\
16384 & 0 & 30.7175 & 28.9817 \\
\bottomrule
\end{tabular}
\begin{minipage}{\linewidth}
\footnotesize
\textsuperscript{*}PPL for sequences with length $\geq$ n\_query
\end{minipage}
\label{tab:ppl_lev}
\end{table}

\section{Appendix B: Counterexample for K-Means Sensitivities}\label{appb}

This counterexample demonstrates that k-means clustering can fail to identify the set $S$ of important keys, even when the data satisfies the orthogonality conditions ($\delta_1, \delta_2 \to 0$) of the planted model from LevAttention. We show this failure is due to the sensitivity of k-means to large deviations in data point norms, a problem that our row-norm regularization assumption explicitly prevents.

\subsection{Setup of the Counterexample}
Let $d$ be the feature dimension. We construct an $n \times d$ matrix $K$ with $n \gg d$. We define the set of ``relevant keys" $S \subset [n]$ with $|S| = d/2$ (assuming $d$ is an even integer). The remaining $n-|S|$ keys form the set $S^c = [n] \setminus S$. We define the rows of $K$ as follows:
\begin{enumerate}
    \item \textbf{For $j \in S$:} Let $S = \{1, 2, \ldots, d/2\}$. For each $j \in S$, $K_j$ is a standard basis vector with unit Euclidean norm, supported on the first $d/2$ coordinates. $$ K_j = \mathbf{e}_j = (\underbrace{0, \ldots, 0}_{j-1}, 1, 0, \ldots, 0, \underbrace{0, \ldots, 0}_{d/2}) \in \mathbb{R}^d $$ where the single $1$ is in the $j$-th position. So, $||K_j||_2^2 = 1$.
    \item \textbf{For $l \in S^c$:} Let $S^c = \{d/2+1, \ldots, n\}$. For each $l \in S^c$, $K_l$ is a vector with a large Euclidean norm $M \gg 1$, supported on the remaining $d/2$ coordinates. For simplicity, we  assume all of the $K_l$ for $l \in S^c$ are identical and non-zero only in the $(d/2+1)$-th coordinate. $$ K_l = (0, \ldots, 0, M, 0, \ldots, 0) \in \mathbb{R}^d $$ where the value $M$ is in the $(d/2+1)$-th position. So, $||K_l||_2^2 = M^2$.
\end{enumerate}

\subsection{Verification of Planted Model Assumptions}
We check if the counterexample satisfies the planted model assumptions (1) and (2) of LevAttention for small $\delta_1, \delta_2$.
\begin{itemize}
    \item \textbf{Assumption (1): $\forall j,l \in S, j \ne l, |K_{j}K_{l}^{T}|\le\delta_{1}\cdot \min(||K_{j}||_{2}^{2},||K_{l}||_{2}^{2})$} For $j, l \in S$ and $j \ne l$, $K_j = \mathbf{e}_j$ and $K_l = \mathbf{e}_l$. Since $j \ne l$, $K_j K_l^T = \mathbf{e}_j^T \mathbf{e}_l = 0$. The minimum norm is $\min(||K_j||_2^2, ||K_l||_2^2) = \min(1, 1) = 1$. Thus, $0 \le \delta_1 \cdot 1$. We can choose $\delta_1 = 0$, which satisfies the assumption. 
    
    \item \textbf{Assumption (2): $\forall j\in S,l\notin S,|K_{l}K_{j}^{T}|\le\delta_{2}\cdot \min(||K_{j}||_{2}^{2},||K_{l}||_{2}^{2})$} For $j \in S$, $K_j$ is supported on the first $d/2$ coordinates. For $l \notin S$, $K_l$ is supported on the $(d/2+1)$-th coordinate. Therefore, $K_j K_l^T = 0$. The minimum norm is $\min(||K_j||_2^2, ||K_l||_2^2) = \min(1, M^2) = 1$ (since $M \gg 1$). Thus, $0 \le \delta_2 \cdot 1$. We can choose $\delta_2 = 0$, again satisfying the asumption. 
\end{itemize}
This simple example demonstrates that $\delta_1$ and $\delta_2$ can be zero, implying perfect orthogonality between points in $S$, and between points in $S$ and points in $S^c$.

\section{Appendix C: Supplementary Proofs}
\label{appc}

\subsection{Proof of Lemma~\ref{lem1}}
\begin{proof}
By the Cauchy–Schwarz inequality, for any unit vector $x$, $(A_i^\top x)^2 \le \|A_i\|^2$, so
$h_i = \sup_{\|x\|=1}\frac{(A_i^\top x)^2}{\|A x\|^2} \le \frac{\|A_i\|^2}{\sigma_{\min}^2}.$
Under the Gaussian noise model, $\|A_i\|^2 \approx d\,\sigma_N^2 = d\,(c_N/n\epsilon)$. As established in the proof of Theorem~\ref{thm1}, standard matrix concentration (given in LevAttention) ensures that $\sigma_{\min}^2=\lambda_{\min}(A^\top A)=\Theta(1/\epsilon)$ and does not decay with $n$. Since $n\gg d/\epsilon$, we obtain $h_i = O(d/n)$.
\end{proof}


\subsection{Proof of Lemma~\ref{lem2}}
\begin{proof}
Since $A_i = v_j + \delta_{i,j}$ and $\|v_j\|=1$, we have $(A_i^\top v_j)^2 = \bigl(1 + \delta_{i,j}^\top v_j\bigr)^2 \approx 1$ (up to $O(\|\delta_{i,j}\|)$). Meanwhile, $\|A v_j\|^2 = \sum_{\ell=1}^n (A_\ell^\top v_j)^2 \approx \sum_{i\in S_j}1 = m = \lceil\frac1\epsilon\rceil$. Hence $h_i \ge \frac{1}{m} = \Theta(\epsilon)$.
\end{proof}

Recall that for any row $i$, its leverage score is
$h_i = A_i\,(A^\top A)^{-1}A_i^\top = \sup_{\|x\|=1}\frac{(A_i^\top x)^2}{\|A x\|^2}.$

\subsection{Proof of Theorem~\ref{thm1}}
\begin{proof}[Proof sketch]
Since $\|A_i\|_2=1$ and $\sigma_{\min}^2=\Theta(1/\varepsilon)$,
\[
h_i = A_i^\top (A^\top A)^{-1}A_i \le \frac{\|A_i\|_2^2}{\sigma_{\min}^2}
= \frac{1}{\Theta(1/\varepsilon)} = O(\varepsilon),
\]
so $\max_{i\in N} h_i \le C_{\mathrm{noise}}\varepsilon$.
By Lemma~\ref{lem2}, each signal row satisfies
$h_i \ge C_{\mathrm{sig}}\varepsilon$, completing the proof.
\end{proof}

\subsection{Explanation of Theorem~\ref{thm2}}

The key to $k$-means recovering the planted structure is cluster separability: each “signal" row with a large leverage score must lie far enough from every other row that allocating it its own centroid strictly lowers the within-cluster distortion. In our analysis we therefore initialize $k=\Theta(d/\epsilon)$ centroids—one for each row whose leverage score is $\Omega(\epsilon)$, plus a single centroid that absorbs all residual (low-score) rows. Under this choice the clusters are provably well-separated, and the standard $k$-means objective attains its global minimum exactly at the desired partition, yielding a clean solution for the instance at hand.

\medskip

\subsection{Proof of Claim \ref{claim1}} \label{claim1_proof}
\begin{proof}
We follow three steps: (1) compute inter-centroid $\ell_p$ separations, (2) bound intra-cluster $\ell_p$ variances, (3) invoke well-separated cluster recovery for $\ell_p$-$k$-means.

\medskip
\noindent\textbf{1. True centroid positions and inter-cluster distances.}\\
The signal centroids are 
$$
v_1,\dots,v_d,\quad\text{and}\quad 0\in\mathbb R^d,
$$
with $\|v_j\|_p^p = 1$ (one coordinate of magnitude 1, rest zero), and 
$$
\|v_j - v_k\|_p^p 
= |1-0|^p + |0-1|^p = 2,\quad j\neq k,
\quad
\|v_j - 0\|_p^p = 1.
$$
Hence the minimum inter-centroid distance (in $p$-th power) is 
$$
\Delta_{\min}
= \min\bigl\{\|v_j - 0\|_p^p,\|v_j - v_k\|_p^p\bigr\}
= 1.
$$

\medskip
\noindent\textbf{2. Intra-cluster $\ell_p$ variances.}\\
Fix any signal cluster $S_j$.  For $i\in S_j$, 
$$
A_i = v_j + \delta_{i,j}, 
\quad \delta_{i,j}\sim\mathcal N(0,\sigma_S^2 I_d).
$$
We need 
$$
\mathbb E\bigl[\|A_i - v_j\|_p^p\bigr]
= \mathbb E\bigl[\|\delta_{i,j}\|_p^p\bigr].
$$
By standard moment bounds for a Gaussian vector in $\mathbb R^d$ (e.g.\ Rosenthal-type inequalities), there is a constant $C_p$ so that
$$
\mathbb E\bigl[\|\delta_{i,j}\|_p^p\bigr]
= \Theta\bigl(d\,\sigma_S^p\bigr)
= \Theta\bigl(d\,(c_S/d)^{p/2}\bigr)
= O(c_S^{p/2}\,d^{1-p/2}).
$$
Since $m=|S_j|=\lceil1/\epsilon\rceil$, by concentration of i.i.d.\ sums (via Rosenthal + Markov), with probability $1-e^{-\Omega(m)}$,
$$
\max_{i\in S_j}\|A_i - v_j\|_p^p
\;\le\; O\bigl(c_S^{p/2}\,d^{1-p/2}\bigr)
\;\ll\; 1
$$
whenever $c_S$ is small

Similarly for the noise cluster $S_0$, each $A_i=\eta_i\sim\mathcal N(0,\sigma_N^2I)$ gives
\[
\begin{split}
\mathbb E\bigl[\|\eta_i\|_p^p\bigr]
&= \Theta\bigl(d\,\sigma_N^p\bigr) \\
&= \Theta\bigl(d\,(c_N/(n\epsilon))^{p/2}\bigr) = O\bigl((c_N/(n\epsilon))^{p/2}\,d\bigr),
\end{split}
\]
and by a Markov bound with constant probability \footnote{See \textit{Computing Apporximate $l_p$ Sensitivities, \cite{padmanabhan2023}} for detail}
\[
\max_{i\in S_0}\|\eta_i\|_p^p\ll1
\]
if $c_N$ is small and $n\gg d/\epsilon$.

Thus the maximum within-cluster $p$-power deviation is  
\[
\begin{split}
\delta_{\max} &:= \max\Bigl\{\max_{i\in S_1\cup\cdots\cup S_d}\|A_i - v_{\mathrm{true}(i)}\|_p^p,\,
\max_{i\in S_0}\|\eta_i\|_p^p\Bigr\} \\
&= o(1)\,,
\end{split}
\] 
where $\mathrm{true}(i)$ is the signal-index for row $i$.

\medskip
\noindent\textbf{3. Well-separated clustering and exact recovery.}\\
Consider the $k$-means objective under $\ell_p$-powers:
$$
\min_{\substack{C_0,\dots,C_d\\\mu_0,\dots,\mu_d}}
\sum_{j=0}^d \sum_{i\in C_j}\|A_i - \mu_j\|_p^p.
$$
We compare the cost of the true partition $\{S_0,\dots,S_d\}$ with any incorrect partition that assigns some point $i_*\in S_j$ to the wrong cluster $C_k$, $k\neq j$.  Since its true centroid $v_j$ and the wrong centroid $v_k$ satisfy
$$
\|A_{i_*}-v_k\|_p^p
\;\ge\; \big(\,\|v_j - v_k\|_p - \|A_{i_*}-v_j\|_p\,\big)^p
\;\ge\; \big(2^{1/p}-\delta_{\max}^{1/p}\big)^p.
$$
whereas the cost at the correct centroid is
$\|A_{i_*} - v_j\|_p^p \le \delta_{\max}.$
Thus the extra cost of misplacing any single point is at least
$$
(2-\delta_{\max}) - \delta_{\max}
= 2 - 2\delta_{\max} > 0
$$
provided $\delta_{\max}<1$.  A symmetric argument holds for mis-assigning a noise point from $S_0$ to any $S_j$. Because moving any point to a wrong cluster strictly increases the total cost, the unique global minimizer of the $k$-means objective is the true partition $\{S_0,\dots,S_d\}$.  

This completes the proof of Claim \ref{claim1}.
\end{proof}

\section{Appendix D: Formal Argument: Identifying Set S with Optimal K-Means under Planted Model}\label{appd}

This argument demonstrates how an optimal $k$-means clustering can be expected to identify the set of relevant keys $S$ (by making each $K_j \in S$ a singleton cluster) when the data adheres to the specific geometric structure described in the planted model of Section 4 of LevAttention\cite{kannan2024}.

\subsection{Context and Assumptions}

The analysis relies on the following key aspects from the paper:
\begin{enumerate}
    \item \textbf{Key Matrix $K$:} An $n \times d$ matrix where each row $K_i$ is a key vector.
     \item \textbf{Planted Model Structure:} The model assumes the existence of a latent subspace $V$ (dimension $k_{latent} \le d/4$). Keys are distributed based on their relation to $V$ and its orthogonal complement $V^\perp$:
    \begin{itemize}
         \item For $i \in S$: $K_i$ is primarily within $V$, with a small "noise" component in $V^\perp$.
         \item For $i \notin S$: $K_i$ is primarily within $V^\perp$, with a small component in $V$.
    \end{itemize}
    \item \textbf{Row Norms:} We simplify by assuming $||K_i||_2^2 \approx c$ for some constant $c>0$ for all $i \in [n]$. This prevents the issue demonstrated in the counterexample where disparate norms bias $k$-means.
     \item \textbf{Leverage Score Relation:} The paper defines a threshold $\rho = \frac{1}{1+\delta_1^2|S|+\delta_2^2n}$ such that the set of points with leverage score $LS(K_i) \ge \rho$ forms a "universal set" $\tilde{U}$ of size $|\tilde{U}|  \le d/\rho$, and $S \subseteq \tilde{U}$. This implies that for smaller $\delta_1$ and $\delta_2$, $\rho$ tends to be larger, leading to a smaller $\tilde{U}$.
\end{enumerate}

\subsection{Implied Geometric Isolation by the Planted Model}

The structure from Section 4 of LevAttention, combined with small $\delta_1$ and $\delta_2$, dictates the Euclidean distances between key vectors. The squared Euclidean distance is $||K_a - K_b||_2^2 = ||K_a||_2^2 + ||K_b||_2^2 - 2K_a K_b^T$. Assuming $||K_i||_2^2 \approx c$:

\begin{enumerate}
    \item \textbf{Between distinct points within $S$ ($K_j, K_{j'} \in S, j \ne j'$):}
    From ($P_1$), $|K_j K_{j'}^T| \le \delta_1 c$. Thus, $K_j K_{j'}^T \in [-\delta_1 c, \delta_1 c]$.
    The squared distance is $||K_j - K_{j'}||_2^2 = 2c - 2K_j K_{j'}^T$.
    Therefore, $2c(1-\delta_1) \le ||K_j - K_{j'}||_2^2 \le 2c(1+\delta_1)$.
    If $\delta_1$ is sufficiently small (e.g., $\delta_1 \to 0$), $K_j K_{j'}^T \to 0$, implying $K_j$ and $K_{j'}$ are nearly orthogonal. The distance approaches $2c$.

    \item \textbf{Between a point in $S$ and a point not in $S$ ($K_j \in S, K_l \notin S$):}
    From ($P_2$), $|K_j K_l^T| \le \delta_2 c$. Thus, $K_j K_l^T \in [-\delta_2 c, \delta_2 c]$.
    The squared distance is $||K_j - K_l||_2^2 = 2c - 2K_j K_l^T$.
    Therefore, $2c(1-\delta_2) \le ||K_j - K_l||_2^2 \le 2c(1+\delta_2)$.
    If $\delta_2$ is sufficiently small (e.g., $\delta_2 \to 0$), $K_j K_l^T \to 0$, implying $K_j$ and $K_l$ are nearly orthogonal. The distance approaches $2c$.
\end{enumerate}
This analysis confirms that if $\delta_1$ and $\delta_2$ are sufficiently small, any point $K_j \in S$ is approximately orthogonal to all other points $K_x$ in the dataset (for $x \ne j$). This means points in $S$ are **geometrically isolated** from each other and from points in $S^c$. The minimum squared distance from any $K_j \in S$ to any other point $K_x$ is $\min(2c(1-\delta_1), 2c(1-\delta_2))$. Let this minimum be $D_{sep}$. If $\delta_1, \delta_2 < 1$, then $D_{sep} > 0$.

\subsection{Optimal K-Means Clustering}

The $k$-means algorithm aims to minimize the total sum of squared Euclidean distances from each data point to its cluster's centroid: $\sum_{C \in \mathcal{C}} \sum_{x \in C} ||x - \mu_C||_2^2$.

Consider a hypothetical partition $\mathcal{P}^*$ where:
\begin{enumerate}
    \item Each $K_j \in S$ is assigned to its own singleton cluster, $\{K_j\}$. The contribution of these $|S|$ clusters to the total distortion is $\sum_{K_j \in S} ||K_j - K_j||_2^2 = 0$.
    \item The remaining $n-|S|$ points ($K_l \notin S$) are optimally clustered into the remaining $k-|S|$ clusters.
\end{enumerate}

Now, consider an alternative partition $\mathcal{P}'$ where at least one point $K_j^* \in S$ is *not* a singleton, meaning it's grouped with at least one other point $K_x \ne K_j^*$. Let $C^*$ be the cluster containing $K_j^*$.
The contribution of $K_j^*$ to the distortion of $C^*$ is $||K_j^* - \mu_{C^*}||_2^2$. Since $C^*$ contains at least two points, $\mu_{C^*}$ will not be $K_j^*$, and thus $||K_j^* - \mu_{C^*}||_2^2 > 0$.
A lower bound for the distortion of a cluster containing $K_j^*$ and $K_x$ is $\frac{1}{2}||K_j^* - K_x||_2^2$. This is at least $\frac{1}{2} D_{sep}$.

For $\mathcal{P}'$ to be optimal, its total distortion must be less than or equal to that of $\mathcal{P}^*$. However, by forcing $K_j^*$ into a non-singleton cluster, $\mathcal{P}'$ incurs a minimum distortion of at least $\Delta_{min} = D_{sep}/2$ for $K_j^*$ (assuming $K_j^*$ is grouped with only one other point, and that point is optimal). In contrast, $\mathcal{P}^*$ achieves 0 distortion for $K_j^*$.

If $\Delta_{min}$ is sufficiently large (i.e., $\delta_1, \delta_2$ are very small), the penalty incurred by grouping an $S$ point with any other point is substantial. An optimal $k$-means algorithm, seeking to minimize the overall sum of squares, will strongly prefer to assign each $K_j \in S$ to its own singleton cluster, provided there are enough available centroids ($k \ge |S|$).

\subsection{Relationship to Leverage Scores and Choices of k}

The geometric isolation of $K_j \in S$ (due to small $\delta_1, \delta_2$) directly corresponds to them having high leverage scores. A high leverage score signifies that a point is not redundant and is influential in determining the subspace spanned by the data.  The paper states that $S \subseteq \{i : LS(K_i) \ge \rho\}$.

 The total sum of leverage scores is $d$. This implies that only a limited number of points can have high leverage scores.  Specifically, the number of points with $LS(K_i) \ge \rho$ is bounded by $d/\rho$.

Choosing $k \approx O(d/\rho)$ for $k$-means is thus a natural choice. This value of $k$ represents the approximate number of "truly distinct" or "influential" components in the data. Since the points in $S$ are by definition isolated and influential (high leverage scores), and the points in $S^c$ with low leverage scores are redundant (as discussed in your intuition), optimal $k$-means will:
\begin{enumerate}
    \item Utilize $|S|$ of the $k$ available clusters to form singleton clusters for each $K_j \in S$, achieving 0 distortion for these points.
    \item Distribute the remaining $k-|S|$ clusters among the $n-|S|$ points in $S^c$. The low leverage scores of many $S^c$ points imply they are either of small norm or highly correlated with other points, making them amenable to being clustered efficiently with other $S^c$ points without significant distortion or "interfering" with the isolation of $S$.
\end{enumerate}

\subsection{Conclusion}

Under the assumptions of the planted model (Section 4 of LevAttention), where small $\delta_1$ and $\delta_2$ parameters ensure that points in $S$ are geometrically distinct and isolated from all other points in the dataset, an optimal $k$-means clustering will strongly favor assigning each $K_j \in S$ to its own singleton cluster. This is because creating singleton clusters for these points results in zero distortion for them, which cannot be improved upon. Given a sufficient number of clusters, e.g., $k \approx O(d/\rho)$, which aligns with the number of high-leverage points in the dataset, $k$-means will successfully identify the set $S$ by placing its elements into distinct singleton clusters. This approach leverages the geometric properties of the data that also underlie the concept of leverage scores.
\section{Appendix E: Baseline Performance of LevAttention on ViT}\label{appe}


\begin{table}[H]
\centering
\small
\caption{ViT ImageNet-1k validation accuracy from LevAttention.}
\label{tab:vit-attn-accuracies}
\begin{tabular}{l r}
\toprule
\textbf{Model} & \textbf{Top-1 Acc. (\%)} \\
\midrule
S/16 (softmax) & 76.47 \\
S/16 (LevAttn, top-32\textsuperscript{*}) & 13.3 \\
S/16 ($\ell_2$ norm, top-32\textsuperscript{*}) & 3.3 \\
S/16 (LevAttn, top-32) & 68.30 \\
S/16 (LevAttn, top-64) & 72.48 \\
\midrule
L/16 (softmax) & 78.83 \\
L/16 (LevAttn, top-32\textsuperscript{*}) & 48.58 \\
L/16 ($\ell_2$ norm, top-32\textsuperscript{*}) & 8.9 \\
L/16 (LevAttn, top-32) & 75.12 \\
L/16 (LevAttn, top-64) & 77.27 \\
L/16 (LevAttn, top-128) & 77.17 \\
\bottomrule
\end{tabular}
\vspace{2pt}

\end{table}



\section{Appendix F: Effects of Residual Coupling Artifacts (GLM2 Ablation)}
\label{appf}

\paragraph{Context.}
Early experiments (denoted GLM2) used an initial integration of pre-scoring with HyperAttention. Subsequent analysis revealed that the interaction between pre-scoring masks and the residual sampling path introduced unintended coupling effects. These effects do not reflect limitations of the pre-scoring principle itself, but rather implementation-level interactions between filtering and the approximate attention pipeline. The corrected integration (GLM3), used throughout the main text, resolves these issues.

\paragraph{Observed Non-Monotonic Behavior.}
GLM2 experiments exhibited a U-shaped perplexity curve as the retained key budget $k$ increased. This behavior arose from three coupling artifacts:

\begin{enumerate}
    \item \textbf{Zeroing of masked keys.}  
    Keys and values outside the pre-scored subset were physically zeroed before attention. This altered the geometry of the key space and caused zero vectors to collapse into shared LSH buckets, introducing artificial block interactions and increased variance at small $k$.

    \item \textbf{Residual reweighting mismatch.}  
    Residual Monte Carlo samples were scaled according to the global key count $n$ rather than the effective retained count $|S|$. When $k$ was small, this overweighted the residual contribution and amplified noise relative to the structured blockwise component.

    \item \textbf{Block–residual overlap.}  
    Keys could appear both in LSH blocks and in the residual sampling path, leading to double-counting of attention contributions. This artificially lowered perplexity at intermediate $k$ and contributed to the non-monotonic curve shape.
\end{enumerate}

These interactions distorted the efficiency–accuracy relationship and explain the GLM2 curve shape without indicating a weakness of pre-scoring itself.

\paragraph{Corrected Coupling (GLM3).}
The integration used in all main-text results removes these effects:

\begin{itemize}
    \item \textbf{Masking via attention bias.}  
    Instead of zeroing keys/values, a boolean mask is injected as a $-\infty$ bias inside the attention kernel, preserving the true geometry of the key distribution.

    \item \textbf{Residual scaling by effective key count.}  
    Residual samples are weighted by $|S|/\text{sample\_size}$, matching the number of valid keys after pre-scoring.

    \item \textbf{Explicit block–residual exclusion.}  
    Keys already used in blockwise attention are excluded from the residual path, preventing double-counting.
\end{itemize}

\paragraph{Effect of Corrections.}
With these adjustments, GLM3 results (reported in the main paper) exhibit a stable and nearly monotonic relationship between perplexity and key budget. At small $k$, pre-scoring primarily removes noisy or low-impact keys, acting as a denoising mechanism. As $k$ increases, additional keys introduce diminishing returns, consistent with the theoretical picture that most attention mass is concentrated in a limited subset of informative keys.

\begin{figure}[h!]
    \centering
    \graphicspath{{./pictures/}}  
    \includegraphics[width=\linewidth]{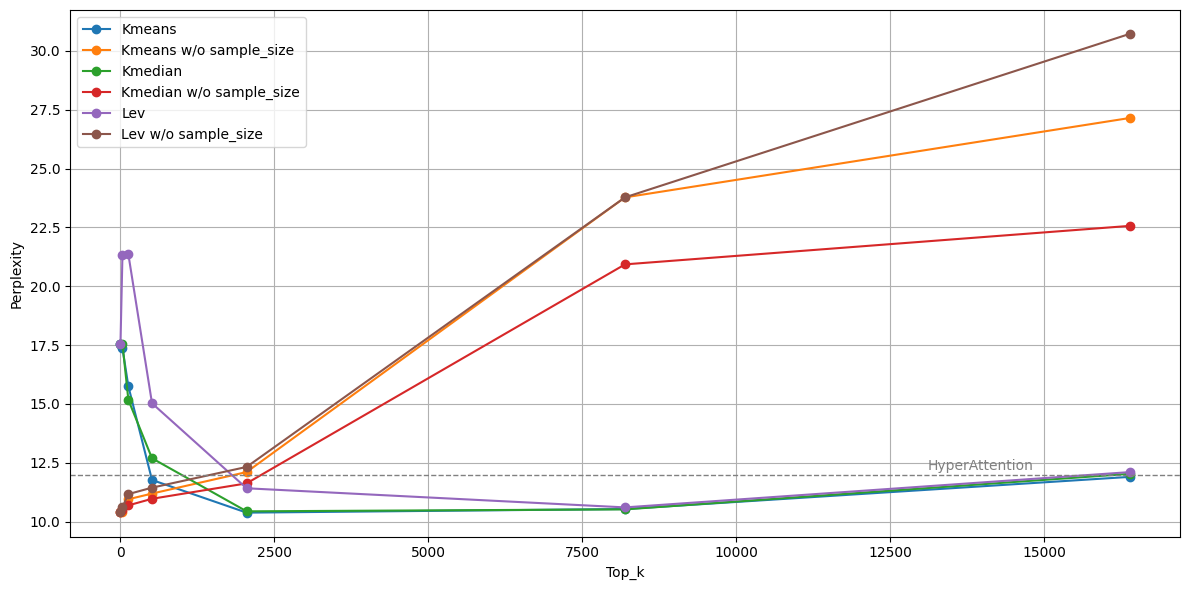}
    \caption{GLM2: The perplexity performance of various key selection strategies—K-means, K-median, and Leverage Score (Lev)—under different values of $k$ (the number of selected keys), where each $k$ is sampled as a power of $2$ (e.g., $128$, $2048$, $8192$).}
    \label{fig:image3}
\end{figure}

\paragraph{Interpretation.}
We therefore treat GLM2 results strictly as an \emph{ablation on coupling behavior} rather than as representative performance. All main-text curves correspond to GLM3, which reflects the intended algorithmic design. This separation clarifies that pre-scoring modifies which key–query interactions are evaluated, while the corrected integration ensures that no unintended interactions are introduced between filtering and residual approximation paths. Additional GLM2-only variants (e.g., Gaussian kernel k-means) are reported in Appendix~\ref{appi}.

\section{Appendix G: Heavy Attention Coverage Percentage}\label{appg}
We evaluated our custom attention mechanism that employs K-means and K-Median sampling to approximate the full attention distribution. This experiment was also conducted using ViT base model with patch 16 \cite{dosovitskiy2021vit} on Imagenet-1k validation set \cite{deng2009imagenet}.We compared the original attention output matrix with the attention output matrix after applying our clustering-based prescoring mechanism for analysis. We vary the number of sampled keys and adjust the threshold parameter $\epsilon$ (with values 0.01, 0.1, and 0.3) to measure the median percentage of heavy attention entries captured. An entry of attention matrix $A$ is considered heavy if $A_{ij}>\epsilon$ for $0\leq i,j \leq N$. From Figure~\ref{fig:median_percentage_kmeans} and Figure~\ref{fig:median_percentage_kmedian}, the capture percentage increases as $\epsilon$ or the number of keys sampled increases and K-Means has some marginal performance increase compared to K-Median. Additionally, we tested the number of top columns that contain the most heavy attention entries, and how well these can be captured by both sampling approaches. The result is shown in table\ref{tab:vit_results2}. While the overall heavy attention entries coverage rate shows a linear relationship, the top-$k$ coverage remains the same as the configurations change (where $k$ aligns with our number of keys sampled).
\begin{figure}[H]
  \centering
  \begin{minipage}{0.48\textwidth}
    \centering
    \begin{tikzpicture}
      \begin{axis}[
        width=5.2cm,
        height=5.2cm,
        xlabel={\normalsize Keys Sampled - K-Means},
        ylabel={\normalsize Percentage},
        grid=major,
        tick label style={font=\normalsize},
        label style={font=\normalsize},
        legend style={
  at={(1.05,0.5)}, 
  anchor=west,
  font=\normalsize
}
      ]
        \addplot[blue, thick, mark=*] coordinates {
           (32,22.57) (64,40.89) (128,70.54)
        };
        \addlegendentry{\(\epsilon=0.01\)}

        \addplot[green, thick, mark=triangle*] coordinates {
           (32,43.89) (64,69.07) (128,87.79)
        };
        \addlegendentry{\(\epsilon=0.1\)}

        \addplot[orange, thick, mark=diamond*] coordinates {
           (32,43.83) (64,69.89) (128,88.15)
        };
        \addlegendentry{\(\epsilon=0.3\)}
      \end{axis}
    \end{tikzpicture}
    \caption{K-Means: Median percentage vs. sampled keys.}
    \label{fig:median_percentage_kmeans}
  \end{minipage}\hfill
  \begin{minipage}{0.48\textwidth}
    \centering
    \begin{tikzpicture}
      \begin{axis}[
        width=5.2cm,
        height=5.2cm,
        xlabel={\normalsize Keys Sampled - K-Median},
        ylabel={\normalsize Percentage},
        grid=major,
        tick label style={font=\normalsize},
        label style={font=\normalsize},
        legend style={
  at={(1.05,0.5)}, 
  anchor=west,
  font=\normalsize
}
      ]
        \addplot[blue, thick, mark=*] coordinates {
           (32,22.56) (64,40.22) (128,69.54)
        };
        \addlegendentry{\(\epsilon=0.01\)}

        \addplot[green, thick, mark=triangle*] coordinates {
           (32,43.91) (64,64.69) (128,81.87)
        };
        \addlegendentry{\(\epsilon=0.1\)}

        \addplot[orange, thick, mark=diamond*] coordinates {
           (32,43.81) (64,69.74) (128,87.64)
        };
        \addlegendentry{\(\epsilon=0.3\)}
      \end{axis}
    \end{tikzpicture}
    \caption{K-Median: Median percentage vs. sampled keys.}
    \label{fig:median_percentage_kmedian}
  \end{minipage}
\end{figure}

\begin{table}[H]
  \caption{Top-k Heavy Columns Coverage}
  \label{tab:vit_results2}
  \centering
  \begin{tabular}{ll}
    \toprule
    Number of Keys Sampled & Average Percentage\\
    \midrule
    \textbf{Kmeans-32} & 15.62\% \\

    \midrule
    \textbf{Kmeans-64} & 32.81\% \\
    \midrule
    \textbf{Kmeans-128} & 65.62\% \\
    \midrule
    \textbf{Kmedian-32} & 18.75\% \\

    \midrule
    \textbf{Kmedian-64} & 32.81\% \\
    \midrule
    \textbf{Kmedian-128} & 65.62\% \\
    \bottomrule
  \end{tabular}
\end{table}

\section{Appendix H: Limitations and Future Work}\label{apph}

Our pre-scoring mechanism improves the accuracy--efficiency trade-off of approximate attention, but it comes with several limitations.

\paragraph{Overhead and hardware efficiency.}
Clustering-based pre-scoring introduces additional overhead (e.g., $O(nd\cdot k \cdot I)$ for $I$ clustering iterations with $k\ll n$), which motivates future work on more hardware-friendly implementations (e.g., minibatch/streaming clustering, quantized distance computations, and fused kernels). The resulting sparse access patterns may also reduce memory coalescing relative to dense attention, suggesting the need for GPU/TPU-aware masking and indexing strategies.

\paragraph{Model assumptions vs.\ real key distributions.}
Our guarantees rely on a planted-subspace abstraction and regularity conditions (e.g., small correlations and controlled row norms). While empirical results suggest robustness beyond the idealized setting, adversarial or highly skewed key distributions may violate these assumptions and degrade clustering quality. Designing adaptive normalization or robust pre-scoring rules that handle extreme outliers remains an important direction.

\paragraph{Baselines and evaluation scope.}
Our experiments focus on full-layer replacement within the HyperAttention pipeline and direct comparison to LevAttention, which are the most structurally aligned baselines for query-independent key prioritization under a fixed interaction budget. Other efficient-attention families---including kernelized approximations (e.g., Performer) and hashing-based methods (e.g., Reformer)---as well as streaming/KV-cache pruning approaches (e.g., attention-sink or heavy-hitter retention policies) target different regimes and constraints, and are not directly matched to our evaluation protocol. A broader benchmarking study that spans these families and decoding-time cache policies is valuable future work.

\section{Appendix I: Gaussian Kernel K-means}\label{appi}

\begin{table}[H]
\centering
\caption{GLM2: PPL comparison for \textbf{Gaussian Kernel K-means} across configurations.}
\small
\setlength{\tabcolsep}{6pt}
\begin{tabular}{@{}lr@{\hspace{14pt}}lr@{}}
\toprule
\multicolumn{2}{c}{\textbf{Sample Size = 256}} & \multicolumn{2}{c}{\textbf{Sample Size = 0}} \\
\cmidrule(lr){1-2}\cmidrule(l){3-4}
\textbf{Top K} & \textbf{PPL} & \textbf{Top K} & \textbf{PPL} \\
\midrule
0     & 17.5410 & 0     & \textbf{10.4122} \\
32    & 18.5715 & 32    & 10.7094 \\
128   & 19.6897 & 128   & 11.6935 \\
512   & 15.6162 & 512   & 11.2801 \\
1024  & 14.3289 & 1024  & 11.8827 \\
2048  & 11.8682 & 2048  & 13.6853 \\
4096  & 10.5833 & 4096  & 17.6743 \\
8192  & \textbf{10.0670} & 8192  & 22.8829 \\
16384 & 11.8495 & 16384 & 34.6969 \\
\bottomrule
\end{tabular}
\label{tab:ppl_otherkmeans}
\end{table}
\FloatBarrier
We further tested Gaussian kernel k-means on GLM2, which computes distance to centroids via the kernel method. Results are reported in Table~\ref{tab:ppl_otherkmeans}. The best performance appears at \verb|top-k=8192| with \verb|Sample Size=256| (\verb|perplexity=10.06|). Similar to previous methods, perplexity reaches the lowest value at \verb|top-k=0| when \verb|Sample Size=0|.

\end{document}